%% file: neurips_2026.tex
\documentclass{article}

\PassOptionsToPackage{numbers,compress}{natbib}

\usepackage[main, final]{neurips_2026}

\usepackage[utf8]{inputenc}
\usepackage[T1]{fontenc}
\usepackage{url}
\usepackage{microtype}
\usepackage[dvipsnames,table,xcdraw]{xcolor}
\usepackage{wrapfig}
\usepackage{pifont}

\usepackage{graphicx}
\usepackage{subcaption}
\usepackage{booktabs}
\usepackage{multirow}
\usepackage{dsfont}

\usepackage{algorithm}
\usepackage{algpseudocode}
\usepackage{etoc}
\usepackage{hyperref}
\usepackage{fontawesome5}

\newcommand{\codeurl}{https://github.com/shkabrii/fairsplit}
\newcommand{\codelink}{\expandafter\url\expandafter{\codeurl}}

\usepackage[outline]{contour}
\contourlength{0.6pt}
\usepackage{amsmath}
\usepackage{amssymb}
\usepackage{amsfonts}
\usepackage{mathtools}
\usepackage{amsthm}

\usepackage[capitalize,noabbrev]{cleveref}
\usepackage{enumitem}

\input{abbreviations}
\definecolor{deemph}{gray}{0.6}

\theoremstyle{plain}
\newtheorem{theorem}{Theorem}[section]
\newtheorem{proposition}[theorem]{Proposition}
\newtheorem{lemma}[theorem]{Lemma}

\theoremstyle{definition}
\newtheorem{definition}[theorem]{Definition}

\theoremstyle{remark}

\newcommand{\yes}{\textcolor{ForestGreen}{\ding{51}}}
\newcommand{\no}{\textcolor{BrickRed}{\ding{55}}}
\newcommand{\somewhat}{\textcolor{blue}{\ding{169}}}
\DeclareRobustCommand{\ermstar}{\contour{black}{\textcolor[HTML]{FFA500}{\ding{72}}}}
\DeclareRobustCommand{\idealplus}{\ding{58}}

\title{\algname{}: Decomposing the Embedding Space for Fair Classification}

\author{
Andrii Shkabrii\textsuperscript{1,2}\thanks{Corresponding author: \texttt{andrii.shkabrii@univie.ac.at}}\quad
Anna Beer\textsuperscript{3,1}\quad
Ylli Sadikaj\textsuperscript{1}\quad
Claudia Plant\textsuperscript{1,4}\\
\textsuperscript{1}Faculty of Computer Science, University of Vienna, Vienna, Austria \\
\textsuperscript{2}Doctoral School Computer Science, University of Vienna, Vienna, Austria \\
\textsuperscript{3}Webster Vienna Private University, Vienna, Austria \\
\textsuperscript{4}Research Network Data Science, University of Vienna, Vienna, Austria
}

\begin{document}
\etocdepthtag.toc{mtmain}

\maketitle

\begin{abstract}
Fair classification faces a twofold challenge: maintaining high classification accuracy while limiting reliance on sensitive information. Our method \algname{} achieves this by learning a decomposition of the embedding space into a Fair Space used for prediction and a Sensitive Space that isolates sensitive information. During training, the model must simultaneously learn the classification task and separate fair from sensitive information. The Sensitive Space gives sensitive attributes a destination, so the Fair Space can be used for prediction without relying on them. This decomposition is guided via a learnable mask and the Hilbert-Schmidt Independence Criterion (HSIC) as a measure of statistical dependence. We provide theoretical guarantees showing that the demographic parity gap is upper-bounded by both (i) the learned dimensionality of the Fair Space and (ii) the HSIC between the Fair Space and sensitive attributes. \algname{} effectively handles multiclass tasks and non-binary sensitive attributes, overcoming key limitations of existing fair classification methods. Experiments on established fairness benchmarks show that \algname{} achieves competitive accuracy-fairness trade-offs. \href{\codeurl}{\textcolor{black}{\faGithub}\ \texttt{Codebase}}
\end{abstract}

\section{Introduction}
\label{sec:intro}

With the expanding use of machine learning (ML) models in high-stakes domains, including hiring~\citep{raghavan2020mitigating}, medicine~\citep{feng2024fair}, criminal justice~\citep{dataset_compas}, credit scoring~\citep{EOppEOdd}, and face recognition~\citep{wang2019racial,gong2020jointly,caldeira2024mst,neto2025balancing}, numerous studies have shown that these systems can perpetuate historical biases affecting underrepresented demographic groups. Consequently, group fairness has gained attention in ML tasks like classification~\citep{feng2024fair,fairnessOverfitting}.
However, state-of-the-art methods for fair classification remain limited: many methods are designed for binary classification or assume binary sensitive attributes, restricting their applicability to realistic multiclass settings~\citep{DiffMCDP_MCDP_ICML, DiffABCC_ABCC_TMLR, DRAlign_FAIRER, fairMixup_iclr}.
While empirical improvements in fairness metrics are frequently reported, theoretical guarantees are scarce.
Most in-processing methods for fair classification regularize dependence between predictions and sensitive attributes during training. However, acting at the prediction level provides limited guarantees for the resulting classifier and does not directly shape the learned representation it operates on.

\begin{figure}[t]
    \centering
    \includegraphics[width=\linewidth]{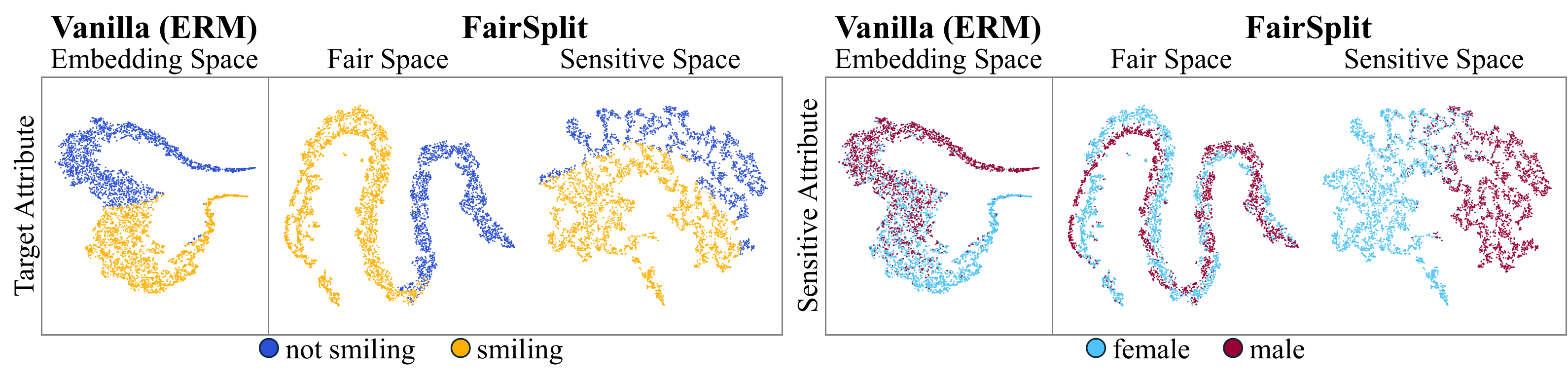}
    \caption{\textbf{Embedding spaces on CelebA-S}, where the task is to predict whether a person is smiling and gender is the sensitive attribute, i.e., the protected characteristic on which predictions should not depend, visualized with t-SNE and colored by the target (left) and the sensitive attribute (right). Each side shows three panels: the Vanilla embedding, trained by empirical risk minimization (ERM) without fairness constraints, and the Fair Space and Sensitive Space of \algname{}. In the Vanilla (ERM) embedding, \textit{smiling}/\textit{not smiling} correlates with \textit{female}/\textit{male}. \algname{} instead classifies from the Fair Space, which is guided towards independence from the sensitive attribute.}
    \label{fig:embedding-space}
\end{figure}

We introduce \algname{}, an in-processing method for fair classification that acts on the representation instead: it learns which embedding dimensions to use for prediction and which to exclude because they carry sensitive information (\Cref{fig:embedding-space}). Specifically, our method splits the embedding space into (i) a Fair Space with minimized dependence on the sensitive attribute using the Hilbert-Schmidt Independence Criterion (HSIC)~\cite{gretton2005measuring}, and (ii) a complementary Sensitive Space that captures residual sensitive information and is excluded from the classifier. The dimensionality of the Fair Space is itself learned during training rather than fixed in advance. We prove that a continuous relaxation of the demographic parity gap is upper-bounded by both this dimensionality and the HSIC between the Fair Space and the sensitive attribute, both of which our method directly controls.
The bound holds for multiclass classification and non-binary sensitive attributes, and applies to the downstream classifier operating on the learned representation. We validate \algname{} on standard benchmarks across multiple modalities, demonstrating competitive accuracy and strong fairness performance. \algname{} succeeds in multiclass settings with non-binary sensitive attributes, whereas many state-of-the-art baselines are limited to binary settings by design, a capability gap we highlight in \Cref{tab:method_capability_summary}.

Our main contributions are as follows:
\begin{enumerate}[label=\(\bullet\), leftmargin=5mm,nosep] 
\itemsep0.15cm
\item We propose \algname{}, an in-processing method for fair classification that decomposes the embedding into a Fair Space used for prediction and a Sensitive Space, learning the dimensionality of the Fair Space jointly with the classifier rather than fixing it in advance.
\item We prove an upper bound on a continuous relaxation of the demographic parity gap that supports multiclass and non-binary sensitive attribute scenarios, and show that \algname{} controls this bound by reducing both the dimensionality of the Fair Space and the HSIC between the Fair Space and sensitive attributes.
\item We empirically validate our theoretical results on established fair classification benchmarks, demonstrating strong performance in both binary settings and multiclass scenarios where most existing in-processing methods cannot be applied.
\end{enumerate}

\section{Related Work}\label{sec:related_work}

\textbf{Fairness Notions and Evaluation:} Fairness is inherently multifaceted and depends on the perspective and application context. In this work, we focus on \emph{group-level fairness}, with an emphasis on \emph{demographic parity}, which requires the outcome of an ML model to be independent of a given sensitive attribute, so that group membership does not influence the prediction.
A central and widely adopted measure for this is the demographic parity gap, $\Delta$DP~\cite{feldman2015certifying}, which can be computed on model logits, prediction probabilities, or the final class assignments. 
Several recent works have proposed refinements and complementary measures, such as MCDP~\cite{DiffMCDP_MCDP_ICML} and ABCC~\cite{DiffABCC_ABCC_TMLR}, which capture fairness violations continuously without reliance on an explicit decision threshold. Other prominent fairness notions include Equal Opportunity (EOpp) and Equalized Odds (EOdd)~\cite{EOppEOdd}, which compare error rates across sensitive groups rather than prediction rates. We report all five metrics in our experiments, with demographic parity and its refinements (ABCC, MCDP) as our primary objectives.

\textbf{Fair Classification and Representation Learning:} Fairness-aware learning methods aim to satisfy one or more fairness notions while maintaining predictive performance. They are commonly divided into pre-processing, in-processing, and post-processing approaches. Pre-processing methods modify the training data, for example by reweighing samples or repairing features, so that a classifier trained on it is less biased~\cite{kamiran2012data,feldman2015certifying}. Post-processing methods instead adjust the predictions of a trained model~\cite{postprocessing_dp_icml,multiclass_postprocessing,multiclass_DP}, which often requires the sensitive attribute at test time~\cite{EOppEOdd}.

\begin{wraptable}{r}{0.46\textwidth}
\centering
\caption{\textbf{Method capabilities.} Support for multiclass targets and non-binary sensitive attributes ($\nngroups > 2$). \yes~supported, \no~binary only, \somewhat~indicates that it should be possible, but has not been tested in the respective papers.}
\label{tab:method_capability_summary}
\small
\setlength{\tabcolsep}{9pt}
\begin{tabular}{lcc}
\toprule
Method & Multiclass & $\nngroups > 2$ \\
\midrule
DRAlign~\cite{DRAlign_FAIRER}      & \no       & \no       \\
DiffMCDP~\cite{DiffMCDP_MCDP_ICML} & \no       & \no       \\
PRemover~\cite{premover}           & \no       & \no       \\
MMD-B-Fair~\cite{mmd-b-fair}       & \no       & \no       \\
FairMixup~\cite{fairMixup_iclr}    & \somewhat & \no       \\
AdvDebias~\cite{AdvDebias}         & \somewhat & \somewhat \\
HSIC$(Y,S)$~\cite{HSIC_ypred_s_journal} & \somewhat & \somewhat \\
\midrule
\rowcolor{gray!15}\algname{} (ours) & \yes & \yes \\
\bottomrule
\end{tabular}
\vspace{-8pt}
\end{wraptable}
In this work, we focus on \emph{in-processing} methods, which add fairness objectives directly to training. There is a broad spectrum of in-processing approaches. In our experiments (\Cref{sec:experiments}), we compare our method to representative techniques from several major categories: 
adversarial approaches against a sensitive-attribute predictor (e.g., AdvDebias~\cite{AdvDebias}),
augmentation-based regularizers (e.g., FairMixup~\cite{fairMixup_iclr}), and 
gap-regularization methods (e.g., DiffMCDP~\cite{DiffMCDP_MCDP_ICML}). 
Furthermore, we compare to DRAlign~\cite{DRAlign_FAIRER}, which constrains intermediate layers by aligning decision rationales across groups, and several methods that, similar to \algname{}, enforce statistical independence between representations or predictions and sensitive attributes (e.g., MMD-B-Fair~\cite{mmd-b-fair}, HSIC$(Y,S)$~\cite{HSIC_ypred_s_journal}, and PRemover~\cite{premover}).

\textbf{Independence and Space Splitting for Fairness:} Our work relates to methods that learn representations explicitly designed to be independent of sensitive attributes. Early approaches such as LFR~\cite{learningFairRepresentations_early} and PRemover~\cite{premover} discourage dependence via parity constraints and mutual information. Other methods enforce independence using statistical tests (e.g., MMD-B-Fair~\cite{mmd-b-fair}), or the Rényi correlation~\cite{renyi}.
The Hilbert-Schmidt Independence Criterion (HSIC) can detect non-linear, statistical dependencies~\cite{gretton2005measuring}. Despite these properties, HSIC has only rarely been used for fair classification~\citep{ffb}. Existing HSIC-based approaches primarily enforce independence at the prediction level: HSIC$(Y,S)$~\cite{HSIC_ypred_s_journal} penalizes dependence between predicted labels and sensitive attributes, and FKL~\cite{HSIC_ypred_s_fair_regression_DR} applies HSIC in a regression setting with a focus on dimensionality reduction. Beyond classification, FairIB~\cite{FairIB_HSIC_fair_recommender} uses HSIC in fair recommender systems. HSIC has also been used to decompose the embedding: building on the mask-based decomposition of~\citet{acedec}, H-SPLID~\cite{H-SPLID_NeurIPS} uses HSIC to split the embedding into task-relevant and task-irrelevant features, and bounds the change in predictions under input perturbations by the dimensionality of the task-relevant space and its HSIC with the input. \algname{} brings this idea to fair classification, with a Fair Space used for prediction and a Sensitive Space that isolates sensitive information. A separate line of variational disentanglement methods~\cite{fair_disentanglement_VAE, sarhan2020fairness, FADES} also decomposes the embedding space, but within a variational autoencoder aimed at fair generation or counterfactual synthesis rather than classification.

\textbf{Open Challenges:} Despite significant progress, several challenges in in-processing fair classification remain. As \Cref{tab:method_capability_summary} shows, most existing methods are limited to binary sensitive attributes and binary classification, and do not naturally extend to multiclass or non-binary sensitive groups. Recent benchmarks~\cite{ffb} reflect this limitation. Yet multiclass classification and multi-group sensitive attributes are increasingly relevant in practice~\cite{multiclass_postprocessing,multiclass_scoring,dcp_fairness_evaluation_eodd_based,fairnessOverfitting,multiclass_DP}. Addressing these settings in a principled manner remains an open problem, which we take up in this work.

\section{Methodology}
\label{sec:methodology}
\algname{} trains a classifier together with a decomposition of its embedding into a Fair Space, which the classifier uses for prediction, and a Sensitive Space, which absorbs sensitive information. We first formalize the setting (\Cref{sec:problem_setup}) and then introduce the training objective that shapes the two spaces (\Cref{sec:decomposition}). We next describe how the mask separating them is learned jointly with the network (\Cref{sec:algorithm}), and finally bound the demographic parity gap of the resulting classifier (\Cref{sec:theoretical_guarantees}).

\subsection{Problem Setup}
\label{sec:problem_setup}
We consider a classification task over a dataset $\mathcal{D} = \{(\vx_i, \y_i, \sens_i)\}_{i=1}^N$, where $\vx_i \in \xspace \subseteq \real^d$ denotes the input, $\y_i \in \yspace = \{1, \dots, K\}$ the class label, and $\sens_i \in \sspace = \{1, \dots, \nngroups\}$ the sensitive attribute with $\nngroups$ groups. We denote their one-hot encodings by $\vy_i \in \{0,1\}^K$ and $\vs_i \in \{0,1\}^{\nngroups}$. The model consists of an encoder $\encoder: \xspace \to \zspace$ that maps each input to an embedding $\vz_i = \encoder(\vx_i)$ in the embedding space $\zspace \subseteq \real^m$, and a classifier $\classifier: \zspace \to \real^K$ that outputs logits, from which class probabilities are obtained with a softmax. We denote all model parameters by $\allparams = \{\encparams, \clfparams\}$. For reference, \Cref{tab:notation} in Appendix~\ref{app:notation} summarizes the notation used throughout the paper.

\subsection{Learning a Fair Decomposition}
\label{sec:decomposition}
To achieve a decomposition of the embedding space into a Fair and a Sensitive Space while preserving high utility, we leverage three mechanisms: structural compression (Equation~\ref{eq:comp_loss}), dependence regularization via HSIC (Equation~\ref{eq:hsic_loss}), and a utility objective for predictive performance (Equation~\ref{eq:task_loss}). \Cref{fig:fairsplit-overview} provides an overview of our approach.
\begin{figure}[t]
    \centering
    \includegraphics[width=\linewidth]{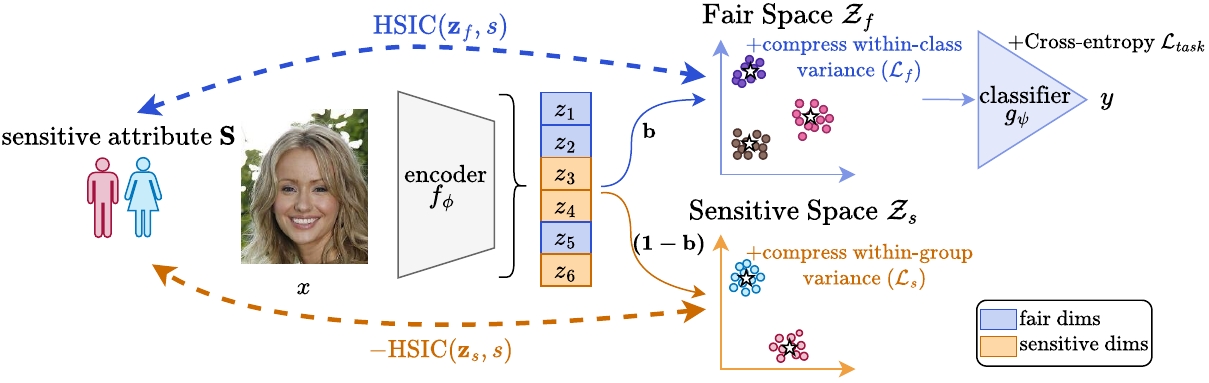}
    \caption{\textbf{Overview of \algname{}.} An encoder maps each input to an embedding $\vz$, and a learned binary mask $\vb$ splits it into a Fair Space $\zfair = \vb \odot \vz$ and a Sensitive Space $\zsens = (\mathbf{1} - \vb) \odot \vz$. Only the Fair Space is passed to the classifier. During training, HSIC regularization reduces the dependence between the Fair Space and the sensitive attribute and increases it in the Sensitive Space, which gives sensitive information a destination outside the classifier's input. The mask is updated in closed form by comparing, per dimension, how strongly the embedding varies within classes and within sensitive groups, so that the size of the Fair Space is learned end-to-end rather than fixed in advance.}
    \label{fig:fairsplit-overview}
\end{figure}
To decompose the embedding $\vz = \encoder(\vx)$, we introduce a binary mask $\vb \in \{0,1\}^m$. A binary mask provides a clear separation: each dimension belongs either to the Fair Space or the Sensitive Space. This enables the classifier to operate on a subset of features that is regularized toward independence from the sensitive attribute. The Fair Space is defined by $\zfair=\vb \odot \vz$, and the complementary Sensitive Space by $\zsens = (\mathbf{1} - \vb) \odot \vz$. The mask, and with it the dimensionality of the Fair Space $\dfair = \|\vb\|_0$, is learned during training rather than fixed in advance.~\looseness=-1

\paragraph{Structural Compression.}
Learning a binary mask requires a signal that indicates which dimensions should be assigned to which space. The structural compression loss (Equation~\ref{eq:comp_loss}) provides this signal by measuring how well each dimension aligns with class labels versus sensitive groups. Dimensions where within-class variance is low relative to within-group variance are assigned to the Fair Space. Conversely, dimensions with low within-group variance are assigned to the Sensitive Space.

We align the Fair Space with the class labels and the Sensitive Space with the sensitive attribute. This is achieved by regularizing the embedding space to compress class-relevant information into the Fair Space and sensitive information into the Sensitive Space (see \Cref{fig:fairsplit-overview}).

Let $\mathcal{I}_k$ and $\mathcal{I}_g$ denote the index sets for class $k$ and sensitive group $g$, with centroids:
\begin{equation}
    \cclass_k = \frac{1}{|\mathcal{I}_k|} \sum_{i \in \mathcal{I}_k} \vz_i, \qquad \cgroup_g = \frac{1}{|\mathcal{I}_g|} \sum_{i \in \mathcal{I}_g} \vz_i.
\end{equation}
The structural compression loss sums a Fair Space term $\mathcal{L}_f$ and a Sensitive Space term $\mathcal{L}_s$:
\begin{equation}
\label{eq:comp_loss}
    \losscomp(\mathcal{D};\encparams,\vb) = \mathcal{L}_f + \mathcal{L}_s,
\end{equation}
where $\mathcal{L}_f = \lamf \sum_{k} \sum_{i \in \mathcal{I}_k} \norm{\vb \odot (\vz_i - \cclass_k)}_2^2$ and $\mathcal{L}_s = \lams \sum_{g} \sum_{i \in \mathcal{I}_g} \norm{(\mathbf{1}-\vb) \odot (\vz_i - \cgroup_g)}_2^2$. The term $\mathcal{L}_f$ compresses within-class variance in the Fair Space, so the classifier learns to rely on class-specific information. The term $\mathcal{L}_s$ compresses within-group variance in the Sensitive Space to identify during training which features carry sensitive information and should be excluded from prediction. The hyperparameters $\lamf$ and $\lams$ control the relative importance of each objective and influence the mask allocation.

\paragraph{HSIC Regularization.}
While the structural compression objective $\losscomp$ aligns the mask $\vb$, it does not guarantee statistical independence. We address this by regularizing the learned subspaces via the Hilbert-Schmidt Independence Criterion (HSIC)~\citep{gretton2005measuring}, a kernel-based measure of statistical dependence. For characteristic kernels (e.g., Gaussian RBF), $\HSIC(U, V) = 0$ iff $U \perp V$, making it well-suited for enforcing independence. Given $n$ samples, HSIC can be estimated empirically as:
\begin{equation}
    \widehat{\HSIC}(U, V) = (n-1)^{-2} \operatorname{tr}(\mat{K}_u \mat{H} \mat{K}_v \mat{H}),
\end{equation}
where $\mat{K}_u, \mat{K}_v$ are kernel matrices and $\mat{H} = \mat{I} - \frac{1}{n}\mathbf{1}\mathbf{1}^\top$ is the centering matrix (see Appendix~\ref{app:preliminaries} for details). For a batch of $n$ samples, let $\mat{Z} = [\vz_1, \ldots, \vz_n]^\top$ and $\mat{S} = [\vs_1, \ldots, \vs_n]^\top$, and let $\mat{Z}_{\tagFair}$ and $\mat{Z}_{\tagSens}$ denote the masked versions of $\mat{Z}$, with rows $\vb \odot \vz_i$ and $(\mathbf{1} - \vb) \odot \vz_i$. We then apply HSIC with opposing objectives to the two spaces:

\begin{align}
\label{eq:hsic_loss}
    \lossreg(\mathcal{D};\encparams,\vb) = \rhof \widehat{\HSIC}(\mat{Z}_{\tagFair},\mat{S})
    - \rhos \widehat{\HSIC}(\mat{Z}_{\tagSens},\mat{S})
\end{align}
The first term reduces statistical dependence between the Fair Space and the sensitive attribute $\sens$. The second term keeps sensitive information in $\zsens$, preventing it from leaking into the Fair Space.

\paragraph{Utility Objective.}
The decomposition serves a dual purpose during training: the model must simultaneously learn the classification task and separate fair from sensitive information. The Sensitive Space $\zsens$ is essential for this process. By learning an explicit space to capture sensitive information, it gives sensitive information a destination outside the Fair Space, reducing the dependence of $\zfair$ on the sensitive attribute. The classifier $\classifier$ operates solely on $\zfair$, so the full model is $\model(\vx) = \classifier(\vb \odot \encoder(\vx))$, and the utility objective is:
\begin{equation}
\label{eq:task_loss}
\losstask(\mathcal{D}; \allparams, \vb) = \frac{1}{N} \sum_{i=1}^N \ell(\vy_i, \classifier(\vb \odot \vz_i)),
\end{equation}
where $\ell$ denotes the standard cross-entropy loss~\cite{goodfellow2016deep}.

\paragraph{Total Objective.}
Combining the three components, we obtain the full training objective:
\begin{equation}\label{eq:loss_full}
    \losstotal(\mathcal{D}; \allparams, \vb) = \losstask + \losscomp + \lossreg
\end{equation}
Each component serves a distinct purpose: $\losstask$ maintains predictive performance, $\losscomp$ provides structural compression for the mask, $\lossreg$ guides the statistical dependence.

\subsection{The \algname{} Algorithm}
\label{sec:algorithm}
Since $\vb$ is binary, the mask cannot be learned by gradient descent. We therefore optimize the model parameters and the mask alternately, as described in Algorithm~\ref{alg:fairsplit} (Appendix~\ref{app:algorithm}). First, we fix the mask and update the network parameters $\allparams$ with Adam, while updating the centroids $\cclass$ and $\cgroup$ with a moving average, so that the targets of the compression loss change smoothly across minibatches. Then, we fix the representations and update the mask by minimizing the two terms of $\losscomp$ with equal weights, so that the assignment of a dimension does not depend on how strongly $\lamf$ and $\lams$ regularize the encoder toward the centroids. By relaxing the binary constraint to the interval $[0,1]$, this sub-problem admits the following closed-form solution:

\begin{proposition}[Closed-Form Mask Solution]
\label{prop:mask_solution}
\begin{equation}\label{eq:mask_solution}
    b_j^* = \frac{V_j^{(\vartagGroup)}}{V_j^{(\vartagClass)} + V_j^{(\vartagGroup)}}, \quad \forall j \in \{1, \dots, m\},
\end{equation}
where $V_j^{(\vartagClass)} = \sum_{k=1}^K \sum_{i \in \mathcal{I}_k} (\symbZ_{ij} - \cclass_{kj})^2$ and $V_j^{(\vartagGroup)} = \sum_{g=1}^{\nngroups} \sum_{i \in \mathcal{I}_g} (\symbZ_{ij} - \cgroup_{gj})^2$ are the within-class and within-group variances, and $\symbZ_{ij}$, $\cclass_{kj}$, $\cgroup_{gj}$ are the $j$-th entries of $\vz_i$, $\cclass_k$, $\cgroup_g$.
\end{proposition}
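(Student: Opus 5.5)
The plan is to make the mask sub-problem explicit and observe that it decouples across coordinates. With the representations $\vz_i$ and the centroids $\cclass_k$, $\cgroup_g$ held fixed, and with $\lamf=\lams=1$ as stated in \Cref{sec:algorithm}, the objective to minimize over $\vb\in[0,1]^m$ is
\begin{equation*}
J(\vb)=\sum_{k}\sum_{i\in\mathcal{I}_k}\norm{\vb\odot(\vz_i-\cclass_k)}_2^2+\sum_{g}\sum_{i\in\mathcal{I}_g}\norm{(\mathbf{1}-\vb)\odot(\vz_i-\cgroup_g)}_2^2 .
\end{equation*}
Expanding each squared norm coordinatewise, the mask enters squared: $(b_j(\symbZ_{ij}-\cclass_{kj}))^2=b_j^2(\symbZ_{ij}-\cclass_{kj})^2$. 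Exchanging the order of summation therefore gives
\begin{equation*}
J(\vb)=\sum_{j=1}^m\Bigl(b_j^2\,V_j^{(\vartagClass)}+(1-b_j)^2\,V_j^{(\vartagGroup)}\Bigr),
\end{equation*}
with $V_j^{(\vartagClass)}$ and $V_j^{(\vartagGroup)}$ exactly as defined in \Cref{prop:mask_solution}. The problem thus separates into $m$ independent one-dimensional problems, one for each $b_j\in[0,1]$.

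For a fixed coordinate $j$, set $\phi_j(b)=b^2V_j^{(\vartagClass)}+(1-b)^2V_j^{(\vartagGroup)}$. Both variances are nonnegative, so $\phi_j$ is a convex quadratic. When $V_j^{(\vartagClass)}+V_j^{(\vartagGroup)}>0$ it is strictly convex, and the first-order condition
\begin{equation*}
2bV_j^{(\vartagClass)}-2(1-b)V_j^{(\vartagGroup)}=0
\end{equation*}
yields the unique stationary point $b_j^*=V_j^{(\vartagGroup)}/(V_j^{(\vartagClass)}+V_j^{(\vartagGroup)})$. I would then check that this unconstrained minimizer already lies in $[0,1]$: it is a ratio of a nonnegative number to a larger-or-equal nonnegative number. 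Hence the box constraint is inactive, and $b_j^*$ is also the constrained minimizer. Collecting the $m$ coordinates gives \eqref{eq:mask_solution}.

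The only real subtlety is bookkeeping rather than difficulty. First, I must make sure the mask appears squared, which follows from the Hadamard product sitting inside the squared norm. If the mask appeared linearly, the relaxed problem would be linear in $b_j$ and would have bang-bang solutions instead of the stated ratio. Second, I must handle the degenerate coordinate where $V_j^{(\vartagClass)}=V_j^{(\vartagGroup)}=0$. There $\phi_j\equiv 0$, every $b_j\in[0,1]$ is optimal, and the formula should be read with a convention such as $0/0:=\tfrac12$, or with a small $\epsilon$ added to the denominator. I would state this caveat explicitly and note that it does not affect the non-degenerate case.
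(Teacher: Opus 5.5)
Your proposal is correct and follows the paper's proof essentially step for step. The common steps are decoupling the equal-weight compression objective into per-coordinate problems $b_j^2V_j^{(y)}+(1-b_j)^2V_j^{(s)}$, solving the first-order condition, invoking strict convexity, and noting that the stationary point already lies in $[0,1]$. Your explicit handling of the degenerate coordinate with $V_j^{(y)}=V_j^{(s)}=0$ is a minor addition, since the paper simply assumes the two variances are not both zero.
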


\noindent The derivation is in Appendix~\ref{app:proof_mask}. Since Equation~\ref{eq:mask_solution} is evaluated on each minibatch and yields values in $[0,1]$, we also smooth the mask with an exponential moving average and binarize it at $0.5$.

\subsection{Theoretical Analysis}
\label{sec:theoretical_guarantees}

We prove that a continuous relaxation of the demographic parity gap is bounded by the dimensionality of the Fair Space and the $\HSIC$ between the Fair Space and the sensitive attribute, for any number $\nngroups \geq 2$ of sensitive groups. For a model $\model$, we generalize the continuous relaxation of the demographic parity gap~\citep{learningFairRepresentations_early,fairMixup_iclr,DiffABCC_ABCC_TMLR} to $K$ classes and $\nngroups$ groups:
\begin{equation}
    \ddp(\model) := \max_{g, g'} \big\| \expect[\model(\vx) \mid \sens=g] - \expect[\model(\vx) \mid \sens=g'] \big\|_2 .
\end{equation}
It measures the largest Euclidean distance between the mean logits of any two sensitive groups. By operating on the continuous outputs, it captures bias in the model's behavior before a final class is assigned. In our experiments, $\Delta$DP denotes the gap in predicted classes (Appendix~\ref{app:metrics}).

\begin{theorem}[Demographic Parity Bound]
\label{thm:dp-hsic}
Let $p_{\min} = \min_g \mathbb{P}(\sens=g) > 0$ denote the minimum group prevalence.
Let $\model = \classifier \circ (\vb \odot \encoder)$ be a masked classifier where
$\encoder: \xspace \to \real^m$ is an encoder,
$\vb \in \{0,1\}^m$ is a binary mask with $\dfair = \|\vb\|_0$ active dimensions,
and $\classifier(\vz) = \mat{W}\vz$ is a linear classification head with $K$ outputs and
largest absolute entry $\|\mat{W}\|_{\max} \leq B$.
Assume the Fair Space $\zfair = \vb \odot \encoder(\vx)$ satisfies $\expect[\norm{\zfair}_2] < \infty$.
Then the demographic parity gap $\ddp(\model)$ is bounded as:
\begin{equation}
\label{eq:dp_bound}
    \ddp(\model) \leq c_\nngroups B\sqrt{K\dfair} \cdot
    \sqrt{\HSIC(\zfair, \vs)},
\end{equation}
where $\HSIC$ uses linear kernels on both arguments (Definition~\ref{def:linear_hsic}) and $c_\nngroups \leq \sqrt{2}/p_{\min}$ is a constant that depends on the number of groups and their prevalences (Equation~\ref{eq:cG}).
\end{theorem}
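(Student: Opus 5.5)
The plan is to reduce everything to the cross-covariance operator between $\zfair$ and $\vs$, which is what linear-kernel HSIC measures. With linear kernels on both arguments (Definition~\ref{def:linear_hsic}), the population HSIC equals $\|\mat{C}\|_F^2$, where
\[
\mat{C} = \expect[\zfair \vs^\top] - \expect[\zfair]\expect[\vs]^\top \in \real^{m\times \nngroups}
\]
is the cross-covariance matrix. So $\sqrt{\HSIC(\zfair,\vs)} = \|\mat{C}\|_F$. The first step is to confirm this identity, including the normalization convention of the paper's definition; any constant factor there would be absorbed into $c_\nngroups$. The matrix $\mat{C}$ is well defined under the assumption $\expect[\|\zfair\|_2]<\infty$, because $\vs$ is bounded.

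The key observation links group-conditional means to columns of $\mat{C}$. Write $p_g = \mathbb{P}(\sens=g)$. The $g$-th coordinate of $\vs$ is the indicator $\mathds{1}[\sens=g]$, so the $g$-th column of $\mat{C}$ is
\[
\mat{C}\mathbf{e}_g = \expect[\zfair \mathds{1}[\sens=g]] - p_g\expect[\zfair] = p_g\big(\expect[\zfair\mid \sens=g]-\expect[\zfair]\big).
\]
Hence $\expect[\zfair\mid \sens=g] - \expect[\zfair\mid \sens=g'] = \mat{C}(\mathbf{e}_g/p_g - \mathbf{e}_{g'}/p_{g'})$. Bounding by the operator norm, and then by the Frobenius norm, gives
\[
\big\|\expect[\zfair\mid g]-\expect[\zfair\mid g']\big\|_2 \le \|\mat{C}\|_F\sqrt{p_g^{-2}+p_{g'}^{-2}}.
\]
I would define $c_\nngroups := \max_{g\neq g'} \sqrt{p_g^{-2}+p_{g'}^{-2}}$, which is at most $\sqrt{2}/p_{\min}$.

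Next, pass through the linear head. Because $\model(\vx) = \mat{W}\zfair$ and expectation is linear, the logit mean gap equals $\mat{W}$ applied to the Fair Space mean gap. The mean gap is supported on the $\dfair$ active coordinates, since $\zfair$ vanishes off the mask. So only the $K\times\dfair$ submatrix $\mat{W}_{\vb}$ of active columns matters, and
\[
\|\mat{W}_{\vb}\|_{op}\le\|\mat{W}_{\vb}\|_F\le B\sqrt{K\dfair}.
\]
Combining this with the previous display and taking the maximum over pairs $(g,g')$ yields Equation~\ref{eq:dp_bound}.

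I expect the main obstacle to be bookkeeping rather than depth. The points to check are that the population linear HSIC in Definition~\ref{def:linear_hsic} is exactly $\|\mat{C}\|_F^2$, with no extra factors or centering differences, and that the constant in Equation~\ref{eq:cG} matches the $c_\nngroups$ defined above. The restriction of $\mat{W}$ to the masked columns is the step that produces the $\sqrt{\dfair}$ factor, so I would state it explicitly.
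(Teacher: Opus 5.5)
\paragraph{Gap: the constant is weaker than the one in the theorem.}
Every inequality in your chain is correct. However, it proves the bound with the constant $c'_G=\sqrt{p_{(1)}^{-2}+p_{(2)}^{-2}}$, not with the $c_G$ of Equation~\eqref{eq:cG} that the theorem refers to, so the bookkeeping check you anticipate would fail. A direct computation gives
\[
(c'_G)^2-c_G^2=\frac{\bigl(p_{(1)}^{-1}-p_{(2)}^{-1}\bigr)^2}{G}\ \ge 0,
\]
which is strictly positive whenever the two rarest groups differ in prevalence. For $G=2$, for instance, $\HSIC(\model(\vx),\vs)=2p_1^2p_2^2\|\boldsymbol{\xi}_1-\boldsymbol{\xi}_2\|_2^2$. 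The paper's $c_2=1/(\sqrt{2}\,p_1p_2)$ therefore makes Lemma~\ref{lem:dp-cov-ident} an equality, whereas your constant is $\sqrt{p_1^2+p_2^2}/(p_1p_2)$. As written, you establish only the weaker claim that the bound holds for \emph{some} constant at most $\sqrt{2}/p_{\min}$.

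\paragraph{The missing ingredient.}
The columns of $\mathbf{C}$ sum to zero. Since the one-hot indicators satisfy $\sum_g s_g\equiv 1$, we have $\mathbf{C}\mathbf{1}=\sum_g\mathrm{Cov}(\zfair,s_g)=\mathrm{Cov}(\zfair,1)=\mathbf{0}$. This is the same fact the paper uses in Step~4 of Lemma~\ref{lem:dp-cov-ident}, namely that the weighted deviations $p_g(\boldsymbol{\xi}_g-\bar{\boldsymbol{\xi}})$ sum to zero. Take $v=\mathbf{e}_g/p_g-\mathbf{e}_{g'}/p_{g'}$ and $P=I-\tfrac{1}{G}\mathbf{1}\mathbf{1}^\top$. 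Then $\mathbf{C}v=\mathbf{C}Pv$, and
\[
\|Pv\|_2^2=\|v\|_2^2-\frac{(\mathbf{1}^\top v)^2}{G}=\frac{(G-1)\bigl(p_g^{-2}+p_{g'}^{-2}\bigr)+2\,p_g^{-1}p_{g'}^{-1}}{G}.
\]
This is increasing in $p_g^{-1}$ and $p_{g'}^{-1}$, so its maximum over pairs is attained at the two rarest groups and equals exactly $c_G^2$. Inserting this projection before your operator-norm step completes the proof.

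\paragraph{Comparison with the paper's route.}
With the projection added, your argument is arguably more transparent than the paper's. The paper first applies Lemma~\ref{lem:dp-cov-ident} to the logits. It obtains $c_G$ from the lower bound $\HSIC\ge t^2+t'^2+(t-t')^2/(G-2)$ followed by a Cauchy--Schwarz maximization over that quadratic form. Only afterwards does it factor $\HSIC(\model(\vx),\vs)=\|\mathbf{W}_{\mathcal A}\mathbf{C}_{\mathcal A}\|_F^2\le\|\mathbf{W}_{\mathcal A}\|_F^2\,\HSIC(\zfair,\vs)$. Handling the head before or after the covariance step makes no difference to the final $B\sqrt{K\dfair}$ factor.
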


Intuitively, Theorem~\ref{thm:dp-hsic} implies that demographic parity is jointly constrained by the statistical dependence ($\sqrt{\HSIC}$) and the dimensionality of the Fair Space ($\sqrt{\dfair}$). This motivates \algname{}'s dual strategy: minimizing $\lossreg$ reduces the dependence between the Fair Space and the sensitive attribute, while the learnable mask $\vb$ moves dimensions that carry sensitive information to the Sensitive Space, reducing $\dfair$. In practice, $\lossreg$ estimates HSIC with a Gaussian kernel: since the kernel is characteristic, driving the first term of $\lossreg$ to zero enforces $\zfair \perp \sens$, under which the dependence term in Equation~\ref{eq:dp_bound} vanishes. The fairness-utility trade-off is empirically stable across smooth kernel families and across a range of bandwidths (see Appendix~\ref{app:kernel_sensitivity}). We observe that a smaller $\dfair$ goes along with better fairness across multiple metrics in Figure~\ref{fig:dim-vs-fairness}.

\paragraph{Proof Sketch.}
The proof (Appendix~\ref{app:proof_main_theorem}) proceeds by first linking the demographic parity gap to the HSIC of the model outputs (Lemma~\ref{lem:dp-cov-ident}). We then exploit the masked linear classification head structure $\model(\vx) = \mat{W}(\vb \odot \vz)$ to decouple this dependence into a weight norm term and a representation covariance term. The factor $\sqrt{\dfair}$ arises from bounding the Frobenius norm of $\mat{W}$ restricted to active dimensions. Since every quantity in the bound can be computed from the trained model and the data, it can be evaluated after training (Appendix~\ref{app:linear_bound_validation}).

\subsection{Limitations and Discussion}
\label{sec:limitations}
Our theoretical bound applies to classifiers with linear heads, covering standard architectures like ResNets~\citep{he2016deep}, ViTs~\citep{dosovitskiy2020image}, and MLPs, while extending guarantees to non-linear heads remains open. In line with this linear scope, we observe that a non-linear probe can still recover the sensitive attribute from the Fair Space (Appendix~\ref{app:probes}). Empirically, however, \algname{} retains substantial fairness gains when the linear head is replaced by a multi-layer non-linear classifier on CelebA-A and UTKFace-AR (Appendix~\ref{app:nonlinear_head}). The linear-head assumption applies only to the classifier: the encoder $\encoder$ can be arbitrarily non-linear. \algname{} applies to fine-tuned ViTs and frozen foundation models (\Cref{sec:pretrained}) and transfers across five pretrained encoders spanning dimensionalities from $512$ to $2048$ without per-architecture retuning, with the mask adapting $\dfair$ to each embedding size (Appendix~\ref{app:encoder_robustness}). Our bound scales with the inverse of the minimum group prevalence $p_{\min}$, reflecting the known challenge of fairness under imbalance~\citep{brzezinski2024properties}, yet \algname{} retains its advantage on intersectional attributes with $p_{\min}\!=\!0.05$ (\Cref{tab:utkface_compas}) and under imposed imbalance (Appendix~\ref{app:imbalance}). More generally, when the target and the sensitive attribute are dependent, any classifier that reduces the demographic parity gap below the difference between the groups' base rates must incur a minimum error~\citep{zhao2022inherent}, a trade-off we quantify for each benchmark in \Cref{sec:binary_results}. A practical limitation is the requirement for sensitive attributes during training, though \algname{} is partly robust to noisy or missing sensitive labels (Appendix~\ref{app:noisy_missing}), and future work could explore semi-supervised approaches when annotations are unavailable. Finally, the Sensitive Space retains information about the sensitive attribute, so applications that must keep it private should protect the model parameters.

\section{Experiments}\label{sec:experiments}
We follow the evaluation protocol of the in-processing fair classification benchmark by~\citet{ffb}, averaging experiments over five seeds each, and extend it to accommodate scenarios with multiclass classification and non-binary sensitive attributes. Code is available at \codelink{}.

\paragraph{Datasets.}
For binary classification experiments, we use the Adult~\citep{dataset_adult} tabular dataset and the CelebA-A~\citep{dataset_celeba} image dataset. In Adult, the task is to predict whether an individual's income exceeds \$50K, with gender (male/female) as the binary sensitive attribute. In CelebA-A, we predict the attractive attribute from face images, using gender as the binary sensitive attribute. To evaluate performance in multiclass settings with non-binary sensitive attributes, we use UTKFace-AR~\citep{dataset_utkface}, where the task is to classify age into three groups (Young: $\leq$25, Middle-aged: 26-45, and Old: $>$45) from face images, with race as the sensitive attribute grouped into three categories (White, Black, and Other). Finally, to evaluate intersectional fairness under severe imbalance, we use the COMPAS recidivism dataset~\citep{dataset_compas} with an intersectional sensitive attribute (gender $\times$ age) yielding four groups with $p_{\min}\!=\!0.05$.  For the experiments with large pretrained models (\Cref{sec:pretrained}), we additionally use FairFace~\citep{karkkainen2021fairface} and CelebA~\citep{dataset_celeba} at $224\times224$ resolution. Further details are provided in Appendix~\ref{app:datasets}. 
\paragraph{Comparison Methods.}
We compare \algname{} against several representative in-processing fair classification baselines: empirical risk minimization (ERM), the adversarial approach AdvDebias~\cite{AdvDebias}, the augmentation-based method FairMixup~\cite{fairMixup_iclr}, the alignment method DRAlign~\cite{DRAlign_FAIRER}, the independence-based methods HSIC$(Y,S)$~\cite{HSIC_ypred_s_journal}, PRemover~\cite{premover}, and MMD-B-Fair~\cite{mmd-b-fair}, as well as the gap-regularization method DiffMCDP~\cite{DiffMCDP_MCDP_ICML}. We note that many of these methods are restricted to binary classification with binary sensitive attributes by design (see Table~\ref{tab:method_capability_summary}).

\paragraph{Evaluation Metrics.}
We report Average Precision (AP$\uparrow$) for utility in binary settings and F1 score (F1$\uparrow$) for multiclass settings, along with several fairness metrics, including demographic parity gap ($\Delta$DP$\downarrow$), equal opportunity ($\Delta$EOpp$\downarrow$), equalized odds ($\Delta$EOdd$\downarrow$), ABCC$\downarrow$~\cite{DiffABCC_ABCC_TMLR}, and MCDP$\downarrow$~\cite{DiffMCDP_MCDP_ICML}. Our theoretical guarantee (Theorem~\ref{thm:dp-hsic}) concerns demographic parity, while $\Delta$EOpp and $\Delta$EOdd compare error rates rather than prediction rates across groups and are not covered by it, so we report them as complementary empirical metrics. For multiclass targets and non-binary sensitive attributes, we extend these metrics by taking the maximum disparity across all classes and all pairs of sensitive groups, following~\citet{multiclass_DP} and~\citet{multiclass_postprocessing}. Note that in fair classification, predictive accuracy is not the sole criterion. When training labels reflect historical biases, a model that perfectly reproduces them may be accurate yet unfair~\cite{HSIC_ypred_s_journal}. Hence, we aim to balance accuracy and fairness by considering only models that achieve at least $95\%$ of ERM's validation utility (AP/F1) and comparing their fairness, following prior work~\citep{jung2023re, DiffMCDP_MCDP_ICML}. See Appendix~\ref{app:metrics} for more details. Following~\citet{ffb}, to characterize the utility-fairness trade-off rather than a single operating point, we additionally visualize trade-off curves by varying each method's fairness-controlling hyperparameter (see Figure~\ref{fig:fairsplit-tradeoffs}). Each curve connects a method's points in order of this hyperparameter, so a curve changes direction wherever a fairness metric does not vary monotonically with it.

\subsection{Comparison and Trade-Offs in Binary Settings}
\label{sec:binary_results}

\begin{table}[t]
\caption{\textbf{Performance on binary settings.} CelebA-A and Adult are binary classification tasks with gender as the sensitive attribute, and we measure utility by average precision (AP). For each method, we consider all configurations that reach at least $95\%$ of ERM's validation AP and report the one with the lowest $\Delta$DP, without further optimizing AP. \textbf{Bold} and \underline{underline} mark the best and second-best results. Results are averaged over 5 seeds. $\Delta\pi$ is the base-rate gap between groups (\Cref{sec:binary_results}).}
\label{tab:combined_performance}
\centering
\setlength{\tabcolsep}{3pt}
\resizebox{\textwidth}{!}{
\begin{tabular}{l rrrrr|r!{\vrule width 0.8pt} rrrrr|r}
\toprule
  & \multicolumn{6}{c!{\vrule width 0.8pt}}{CelebA-A ($\Delta\pi = 40.0$)} & \multicolumn{6}{c}{Adult ($\Delta\pi = 19.8$)} \\
\cmidrule(lr){2-7} \cmidrule(lr){8-13}
 & $\Delta$DP $\downarrow$ & ABCC $\downarrow$ & MCDP $\downarrow$ & $\Delta$EOpp $\downarrow$ & $\Delta$EOdd $\downarrow$ & AP $\uparrow$ & $\Delta$DP $\downarrow$ & ABCC $\downarrow$ & MCDP $\downarrow$ & $\Delta$EOpp $\downarrow$ & $\Delta$EOdd $\downarrow$ & AP $\uparrow$ \\
\midrule
ERM & 52.8$\pm$0.8 & 35.9$\pm$0.8 & 53.4$\pm$0.8 & 36.8$\pm$1.5 & 75.0$\pm$2.1 & \underline{84.5$\pm$0.4} & 19.4$\pm$0.6 & 18.7$\pm$0.2 & 36.6$\pm$0.6 & \underline{9.3$\pm$2.0} & \underline{18.2$\pm$2.3} & \textbf{76.7$\pm$0.5} \\
\midrule
AdvDebias & 45.9$\pm$3.4 & 31.2$\pm$1.6 & 46.8$\pm$3.8 & 26.0$\pm$4.4 & 57.1$\pm$7.6 & \textbf{84.9$\pm$0.7} & 6.1$\pm$5.7 & 7.7$\pm$5.0 & 16.7$\pm$11.1 & 21.8$\pm$9.5 & 25.3$\pm$8.9 & 72.8$\pm$2.6 \\
FairMixup & 37.2$\pm$1.1 & 19.5$\pm$0.9 & 39.5$\pm$1.2 & 19.5$\pm$2.4 & 39.8$\pm$2.5 & 83.1$\pm$0.1 & 1.1$\pm$0.6 & 4.6$\pm$0.5 & 17.2$\pm$2.4 & 32.6$\pm$2.8 & 36.0$\pm$3.3 & 72.9$\pm$0.4 \\
DRAlign & 31.7$\pm$4.9 & 15.4$\pm$2.9 & 33.6$\pm$4.3 & 12.4$\pm$4.8 & 29.9$\pm$9.1 & 80.3$\pm$1.2 & \textbf{0.7$\pm$0.5} & 6.1$\pm$0.3 & 19.8$\pm$0.9 & 34.2$\pm$1.0 & 37.8$\pm$1.0 & 73.6$\pm$0.4 \\
HSIC$(Y,S)$ & 23.9$\pm$1.4 & 13.6$\pm$0.5 & 27.0$\pm$1.7 & 3.3$\pm$1.5 & 11.6$\pm$3.0 & 81.3$\pm$0.5 & 1.9$\pm$0.5 & \underline{4.1$\pm$0.3} & \underline{11.2$\pm$1.1} & 27.8$\pm$1.7 & 31.1$\pm$2.0 & 73.0$\pm$0.4 \\
DiffMCDP & \underline{20.4$\pm$2.4} & 14.4$\pm$1.6 & \underline{22.3$\pm$2.2} & 3.2$\pm$1.0 & \underline{9.8$\pm$2.5} & 80.6$\pm$1.0 & 3.0$\pm$0.4 & 4.3$\pm$0.4 & \textbf{11.0$\pm$1.1} & 25.8$\pm$1.6 & 28.4$\pm$1.7 & 73.6$\pm$0.2 \\
PRemover & 25.1$\pm$1.2 & \underline{13.4$\pm$0.5} & 26.7$\pm$1.2 & \underline{3.1$\pm$1.7} & 13.5$\pm$3.0 & 80.6$\pm$0.2 & 3.6$\pm$0.5 & 7.1$\pm$0.4 & 19.1$\pm$1.1 & 24.7$\pm$1.3 & 26.9$\pm$1.6 & \underline{74.5$\pm$0.4} \\
MMD-B-F. & 37.1$\pm$13.3 & 24.2$\pm$10.5 & 37.8$\pm$12.9 & 17.9$\pm$13.6 & 42.7$\pm$25.0 & 80.4$\pm$4.6 & 3.7$\pm$0.8 & 5.2$\pm$1.7 & 15.0$\pm$5.5 & 24.7$\pm$2.8 & 26.8$\pm$3.4 & 73.1$\pm$1.0 \\
\midrule
\rowcolor{gray!15}\textbf{\algname{}} & \textbf{18.3$\pm$1.1} & \textbf{9.4$\pm$1.3} & \textbf{19.6$\pm$1.8} & \textbf{2.6$\pm$0.3} & \textbf{5.2$\pm$1.7} & 80.4$\pm$1.1 & \underline{1.0$\pm$0.9} & \textbf{2.4$\pm$0.7} & 19.8$\pm$4.5 & \textbf{8.2$\pm$4.1} & \textbf{8.3$\pm$4.2} & 72.9$\pm$0.9 \\
\bottomrule
\end{tabular}
}
\end{table}

\begin{table}[t]
\caption{\textbf{Performance on multiclass and intersectional settings.} UTKFace-AR has a 3-class age target and 3 racial groups, with F1 as utility. COMPAS has a binary target and an intersectional gender~$\times$~age attribute, with AP as utility. Methods restricted to binary targets or sensitive attributes are omitted (\Cref{tab:method_capability_summary}). As in \Cref{tab:combined_performance}, all configurations achieve at least $95\%$ of ERM's validation utility, and \textbf{bold} and \underline{underline} mark the best and second-best results. Results are averaged over 5 seeds. With more than two groups, $\Delta\pi$ is the largest base-rate gap over pairs of groups and classes.}
\label{tab:utkface_compas}
\centering
\setlength{\tabcolsep}{3pt}
\resizebox{\textwidth}{!}{
\begin{tabular}{l rrrrr|r!{\vrule width 0.8pt} rrrrr|r}
\toprule
  & \multicolumn{6}{c!{\vrule width 0.8pt}}{UTKFace-AR ($\Delta\pi = 25.9$)} & \multicolumn{6}{c}{COMPAS ($\Delta\pi = 26.6$)} \\
\cmidrule(lr){2-7} \cmidrule(lr){8-13}
 & $\Delta$DP $\downarrow$ & ABCC $\downarrow$ & MCDP $\downarrow$ & $\Delta$EOpp $\downarrow$ & $\Delta$EOdd $\downarrow$ & F1 $\uparrow$ & $\Delta$DP $\downarrow$ & ABCC $\downarrow$ & MCDP $\downarrow$ & $\Delta$EOpp $\downarrow$ & $\Delta$EOdd $\downarrow$ & AP $\uparrow$ \\
\midrule
ERM & 27.9$\pm$1.4 & 27.1$\pm$2.6 & 36.2$\pm$2.4 & 29.1$\pm$5.2 & 35.9$\pm$3.0 & \underline{73.8$\pm$0.7} & 35.2$\pm$3.3 & 17.9$\pm$1.8 & 58.8$\pm$4.5 & \underline{24.9$\pm$4.7} & 55.4$\pm$8.7 & \textbf{69.1$\pm$2.4} \\
\midrule
AdvDebias & 28.7$\pm$3.0 & 21.1$\pm$1.9 & 32.9$\pm$2.5 & 32.7$\pm$7.8 & 38.5$\pm$7.9 & \textbf{74.7$\pm$0.8} & 26.4$\pm$7.7 & 13.9$\pm$3.1 & 47.8$\pm$10.9 & 29.5$\pm$5.7 & 44.7$\pm$12.4 & \underline{68.4$\pm$2.1} \\
HSIC$(Y,S)$ & \textbf{11.9$\pm$1.3} & \underline{12.0$\pm$0.8} & \underline{20.4$\pm$2.4} & \underline{18.1$\pm$2.7} & \underline{24.1$\pm$3.5} & 70.3$\pm$1.1 & \underline{20.5$\pm$3.1} & \underline{6.7$\pm$0.7} & \underline{35.8$\pm$4.6} & 28.3$\pm$9.2 & \underline{36.1$\pm$9.8} & 67.9$\pm$2.3 \\
\midrule
\rowcolor{gray!15}\textbf{\algname{}} & \underline{12.6$\pm$1.1} & \textbf{9.1$\pm$1.0} & \textbf{18.7$\pm$1.2} & \textbf{14.9$\pm$4.6} & \textbf{22.1$\pm$4.9} & 70.4$\pm$1.5 & \textbf{16.8$\pm$3.6} & \textbf{1.2$\pm$0.4} & \textbf{32.9$\pm$6.2} & \textbf{23.8$\pm$6.7} & \textbf{31.5$\pm$7.6} & 66.8$\pm$2.4 \\
\bottomrule
\end{tabular}
}
\end{table}

\begin{table}[t]
\centering
\caption{\textbf{\algname{} on large pretrained models} (5 seeds, gender as the sensitive attribute). ERM uses the same architecture as \algname{}. ViT-B/16 is fine-tuned end-to-end on FairFace to predict whether a person is at least 30 years old. For the frozen backbones, the Fair Space is learned for predicting high cheekbones on CelebA. A new linear head is then trained on this fixed Fair Space to predict a different attribute, age. $\dfair / m$ is the size of the Fair Space relative to the full embedding.}
\label{tab:pretrained}
\small
\setlength{\tabcolsep}{2.5pt}
\begin{tabular}{l l cccccc}
\toprule
Backbone & Method & AP $\uparrow$ & $\Delta$DP $\downarrow$ & ABCC $\downarrow$ & $\Delta$EOpp $\downarrow$ & $\Delta$EOdd $\downarrow$ & $\dfair / m$ \\
\midrule
ViT-B/16 & ERM & \textbf{88.9$\pm$0.2} & 23.0$\pm$0.5 & 17.2$\pm$0.1 & 22.3$\pm$1.1 & 33.5$\pm$1.1 & 768 / 768 \\
\textit{(fine-tuned)} & \cellcolor{gray!15}\algname{} & \cellcolor{gray!15}86.4$\pm$0.3 & \cellcolor{gray!15}\textbf{9.5$\pm$1.6} & \cellcolor{gray!15}\textbf{5.8$\pm$1.4} & \cellcolor{gray!15}\textbf{8.1$\pm$2.6} & \cellcolor{gray!15}\textbf{9.5$\pm$1.9} & \cellcolor{gray!15}552 / 768 \\
\midrule
CLIP ViT-L/14 & ERM & \textbf{96.5$\pm$0.1} & 21.4$\pm$0.9 & 20.0$\pm$0.6 & 4.9$\pm$0.5 & 23.6$\pm$2.4 & 256 / 256 \\
\textit{(frozen)} & \cellcolor{gray!15}\algname{} & \cellcolor{gray!15}93.1$\pm$0.4 & \cellcolor{gray!15}\textbf{9.1$\pm$1.3} & \cellcolor{gray!15}\textbf{6.4$\pm$0.7} & \cellcolor{gray!15}\textbf{0.8$\pm$0.3} & \cellcolor{gray!15}\textbf{4.8$\pm$1.6} & \cellcolor{gray!15}174 / 256 \\
\midrule
CLIP ResNet-50 & ERM & \textbf{96.6$\pm$0.1} & 23.4$\pm$0.4 & 21.0$\pm$0.3 & 6.1$\pm$0.5 & 29.4$\pm$1.5 & 256 / 256 \\
\textit{(frozen)} & \cellcolor{gray!15}\algname{} & \cellcolor{gray!15}93.6$\pm$0.4 & \cellcolor{gray!15}\textbf{12.2$\pm$2.4} & \cellcolor{gray!15}\textbf{7.9$\pm$1.5} & \cellcolor{gray!15}\textbf{0.9$\pm$0.5} & \cellcolor{gray!15}\textbf{8.5$\pm$5.0} & \cellcolor{gray!15}168 / 256 \\
\midrule
DINOv2 ViT-L/14 & ERM & \textbf{95.2$\pm$0.1} & 23.5$\pm$0.4 & 22.1$\pm$0.2 & 7.3$\pm$0.3 & 35.3$\pm$1.7 & 256 / 256 \\
\textit{(frozen)} & \cellcolor{gray!15}\algname{} & \cellcolor{gray!15}90.9$\pm$0.5 & \cellcolor{gray!15}\textbf{9.1$\pm$1.8} & \cellcolor{gray!15}\textbf{5.9$\pm$1.2} & \cellcolor{gray!15}\textbf{0.5$\pm$0.4} & \cellcolor{gray!15}\textbf{7.1$\pm$1.7} & \cellcolor{gray!15}166 / 256 \\
\bottomrule
\end{tabular}
\end{table}
\begin{figure*}[t]
    \centering
    \includegraphics[width=\linewidth]{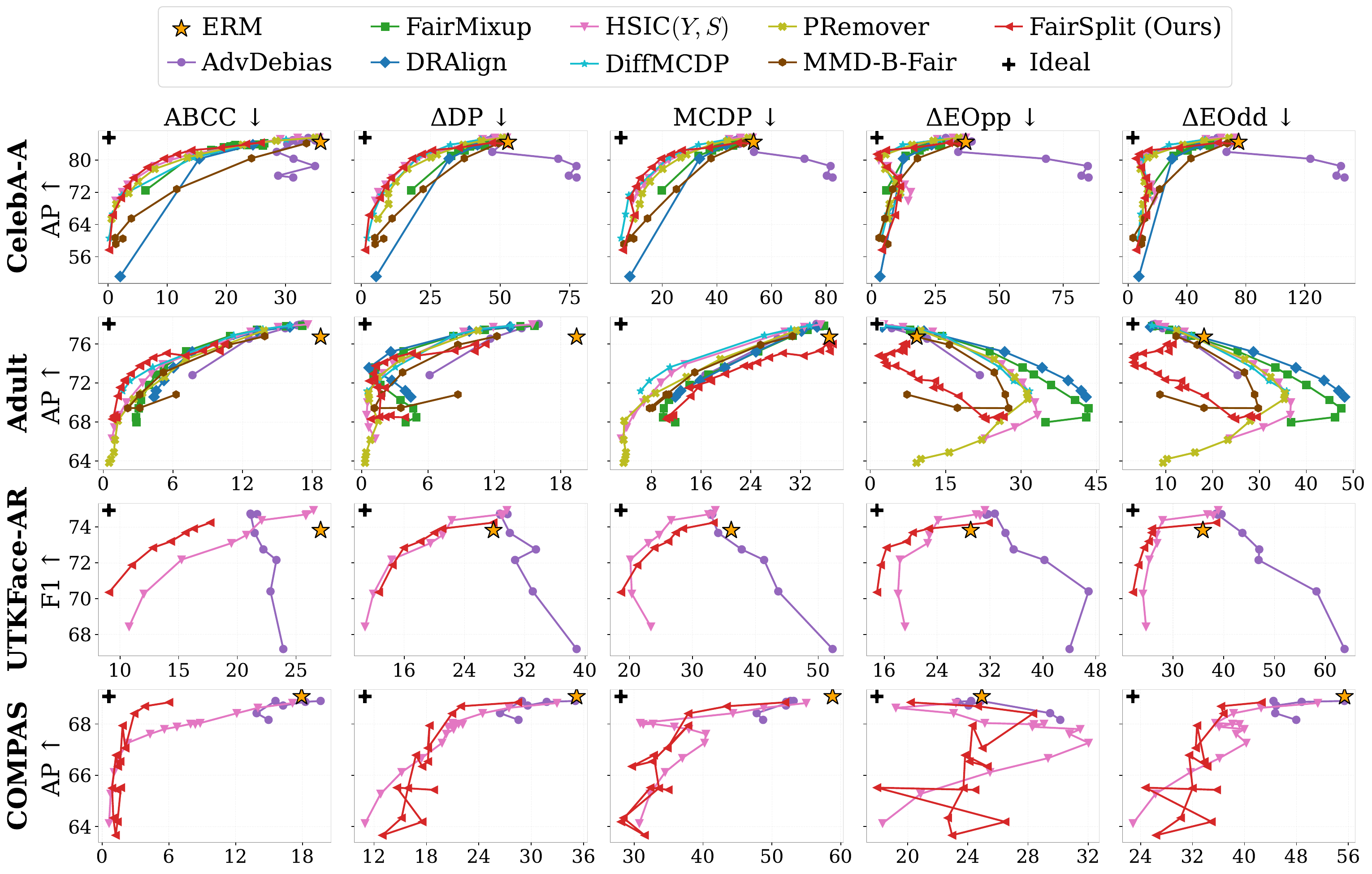}
    \caption{\textbf{Trade-off between performance and fairness} (higher performance is better) across four datasets. Each row is one dataset (CelebA-A, Adult, UTKFace-AR, COMPAS) with its own y-axis (AP for binary classification, F1 for the multiclass UTKFace-AR), and each column is one fairness metric. ERM (\ermstar) is a single point. The ideal model (\idealplus) lies in the top-left corner of every panel. \algname{} matches or outperforms competing methods across most settings and achieves the best ABCC on every dataset in \Cref{tab:combined_performance,tab:utkface_compas}.\label{fig:fairsplit-tradeoffs}}
\end{figure*}
Although most baselines are designed for binary classification with binary sensitive attributes, \algname{} matches or exceeds them on both datasets (\Cref{tab:combined_performance}, \Cref{fig:fairsplit-tradeoffs}). \algname{} achieves the lowest ABCC on Adult and CelebA-A and the lowest MCDP on CelebA-A. A low ABCC means the per-group score distributions are close on average across decision thresholds, while MCDP measures the gap at the single worst threshold. On Adult, predicted scores cluster in a narrow range below $0.5$, so small per-seed differences inflate MCDP while ABCC stays small, which is why methods regularizing predictions directly lead on MCDP there. On $\Delta$EOpp and $\Delta$EOdd, \algname{} achieves the best results on both datasets. While theoretical trade-offs exist between demographic parity and equalized odds~\citep{kleinberg2016inherent}, \algname{} attains a more efficient trade-off in our experiments. More broadly, part of the fairness-utility trade-off is inherent to the data, because it depends on how much the groups' base rates differ. We denote this gap by $\Delta\pi = |\mathbb{P}(\y{=}1 \mid \sens{=}1) - \mathbb{P}(\y{=}1 \mid \sens{=}2)|$ and report it for each benchmark in \Cref{tab:combined_performance,tab:utkface_compas}. \citet{zhao2019inherent,zhao2022inherent} show that a classifier with a $\Delta$DP below $\Delta\pi$ cannot have a low error on both groups, since its two group-wise error rates sum to at least $\Delta\pi - \Delta$DP. Once $\Delta$DP exceeds $\Delta\pi$, this lower bound becomes negative and no longer constrains the classifier. On CelebA-A ($\Delta\pi = 40.0$), ERM and AdvDebias both have a $\Delta$DP above the gap ($52.8$ and $45.9$), so the bound does not prevent AdvDebias from improving fairness and AP at the same time. \algname{} instead reduces $\Delta$DP to $18.3$, well below the gap, which the bound allows only at a minimum error. In a controlled experiment that changes only $\Delta\pi$, the AP cost of \algname{} increases from $0.9$ to $6.7$ as $\Delta\pi$ grows from $5$ to $45$ points (Appendix~\ref{app:basegap}).

\subsection{Multiclass, Non-Binary, and Intersectional Settings}
The embedding-decomposition mechanism behind \algname{} is agnostic to the number of classes $K$ and groups $\nngroups$, so the same algorithm applies to multiclass targets and non-binary sensitive attributes without modification, a property only AdvDebias and HSIC$(Y,S)$ share among our baselines. \Cref{tab:utkface_compas} reports results on UTKFace-AR and COMPAS, the latter being the regime where Theorem~\ref{thm:dp-hsic}'s $1/p_{\min}$ factor makes the bound loosest. On both, \algname{} outperforms on most fairness metrics and is competitive on the rest, with utility within the $95\%$-of-ERM threshold. On COMPAS, \algname{} also attains the lowest ABCC ($1.2$ against $6.7$ for the next-best method). The $1/p_{\min}$ factor stems from the pairwise definition of demographic parity (Lemma~\ref{lem:dp-cov-ident}), yet \algname{}'s empirical fairness does not degrade at the rate the bound predicts. The imposed-imbalance experiment in Appendix~\ref{app:imbalance} confirms this gap between guarantee and behavior persists across imbalance regimes.

\subsection{Vision Transformers and Frozen Foundation Models}
\label{sec:pretrained}
\algname{} applies unchanged to large pretrained models (\Cref{tab:pretrained}). When \algname{} fine-tunes a ViT-B/16~\citep{dosovitskiy2020image} end-to-end on FairFace at $224 \times 224$, it lowers $\Delta$DP from $23.0$ to $9.5$ and more than halves every other fairness metric, while retaining $97\%$ of ERM's AP. We also apply \algname{} to three frozen backbones, CLIP ViT-L/14, CLIP ResNet-50, and DINOv2 ViT-L/14~\citep{radford2021clip,he2016deep,oquab2023dinov2}, training a small projection on top of the features and learning the Fair Space for one CelebA attribute. A new linear head trained on this Fair Space, without sensitive labels, then predicts a related, unseen attribute, and improves every fairness metric on all three backbones while retaining at least $95\%$ of ERM's AP. Appendix~\ref{app:scaling} describes the setup and reports results for both tasks.

\subsection{Controlling Fairness via the Sensitive Space}

\begin{figure}[t]
\centering
\begin{minipage}[t]{0.40\linewidth}
    \vspace{0pt}
    \centering
    \includegraphics[width=0.95\linewidth]{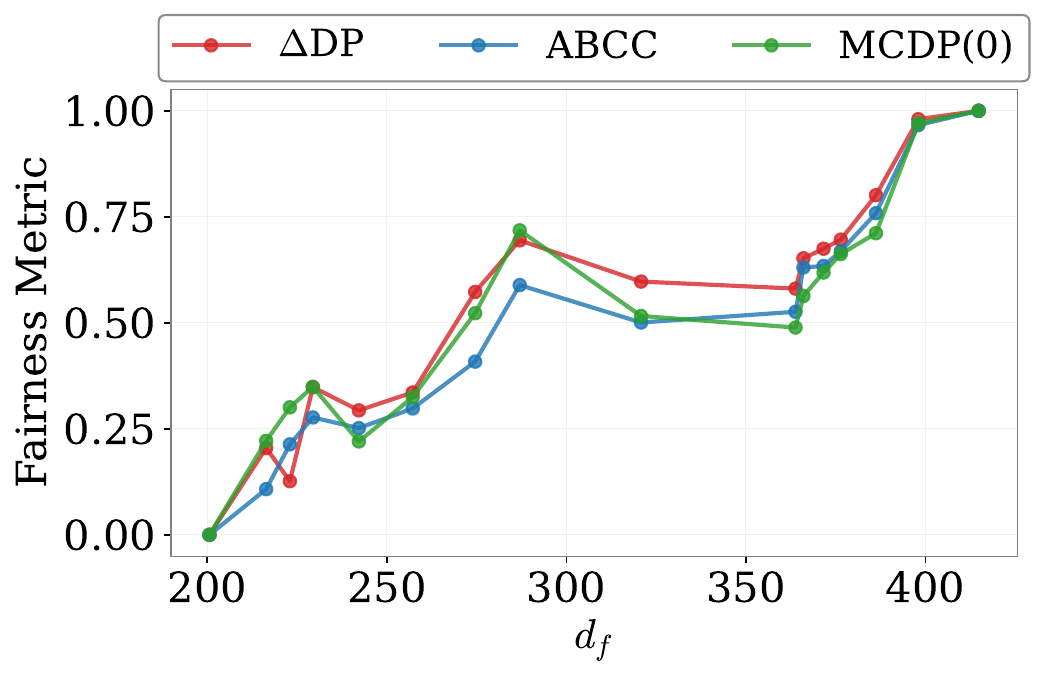}
    \captionof{figure}{\textbf{Fairness vs. Fair Space dim.} Smaller $\dfair$ generally goes along with better fairness across $\Delta$DP, ABCC, and MCDP. Metrics are normalized to $[0,1]$.}
    \label{fig:dim-vs-fairness}
\end{minipage}\hfill
\begin{minipage}[t]{0.58\linewidth}
    \vspace{0pt}
    \centering
    \captionof{table}{\textbf{Ablation of the Sensitive Space} (\mbox{CelebA-A}, 5 seeds). Penalizing HSIC on the full embedding (\emph{No Sensitive Space}) halves ERM's $\Delta$DP, adding the mask without the Sensitive Space losses (\emph{Unregularized}, $\lams\!=\!\rhos\!=\!0$) reduces it further, and full \algname{} yields the lowest $\Delta$DP and ABCC with the smallest Fair Space at comparable AP. More metrics and datasets are in \Cref{tab:component_ablation_extra} in the appendix.}
    \label{tab:component_ablation}
    \small
    \setlength{\tabcolsep}{2.5pt}
    \begin{tabular}{l cccc}
    \toprule
    Variant & AP $\uparrow$ & $\Delta$DP $\downarrow$ & ABCC $\downarrow$ & $\dfair$ \\
    \midrule
    ERM & \textbf{84.5$\pm$0.4} & 52.8$\pm$0.8 & 35.9$\pm$0.8 & 512 \\
    No Sensitive Space & 80.7$\pm$0.7 & 26.6$\pm$1.4 & 16.1$\pm$1.0 & 512 \\
    Unregularized & 80.1$\pm$0.8 & 22.2$\pm$2.4 & 12.4$\pm$1.7 & 391 \\
    \rowcolor{gray!15}\algname{} & 80.4$\pm$1.1 & \textbf{18.3$\pm$1.1} & \textbf{9.4$\pm$1.3} & 334 \\
    \bottomrule
    \end{tabular}
\end{minipage}
\end{figure}

Theorem~\ref{thm:dp-hsic} establishes a connection between the dimensionality of the Fair Space $\dfair$ and fairness: a smaller $\dfair$ tightens the upper bound on the demographic parity gap. To validate this empirically, we train on CelebA-A with varying $\rhos$, which controls how strongly sensitive information is pushed into the Sensitive Space. Figure~\ref{fig:dim-vs-fairness} aligns with our theoretical finding. Higher $\rhos$ yields smaller $\dfair$, and models with lower $\dfair$ generally achieve better fairness, with the same trend visible on the threshold-free metrics ABCC and MCDP. Maximizing the dependence between $\zsens$ and the sensitive attribute via $\rhos$ encourages the model to move sensitive information out of the Fair Space, so the Fair Space becomes smaller without fixing $\dfair$ in advance, which tightens the bound in Theorem~\ref{thm:dp-hsic}.

\subsection{Ablations and Hyperparameter Sensitivity}
\label{sec:ablation}

\textbf{Why an explicit Sensitive Space?} A natural question is whether the Sensitive Space is necessary, or whether suppressing dependence on $\sens$ in a single embedding via HSIC would suffice. \Cref{tab:component_ablation} answers this by introducing the components of \algname{} one at a time on CelebA-A. When HSIC is applied to the full embedding without any decomposition, it acts on the entire input of the classifier, so sensitive information can only be suppressed, not moved to a separate space. This lowers $\Delta$DP from $52.8$ for ERM to $26.6$, which remains well above the $18.3$ of \algname{}. Learning the mask separates the dimensions that mainly encode the target from those that mainly encode the sensitive attribute, so the classifier only sees the Fair Space. This already improves fairness, before any loss term acts on the Sensitive Space ($\lams\!=\!\rhos\!=\!0$). Regularizing the Sensitive Space then actively moves sensitive information into it, which shrinks the Fair Space further and gives the lowest $\Delta$DP at comparable AP. This matches the role of $\dfair$ in Theorem~\ref{thm:dp-hsic} as a structural regularizer of the bound, and probes confirm that the Sensitive Space retains the sensitive attribute (Appendix~\ref{app:probes}). The same pattern holds on UTKFace-AR (Appendix~\ref{app:component_ablation_extra}).\\

\textbf{How sensitive is \algname{} to its hyperparameters?} We vary each of the four loss weights on CelebA-A while keeping the others fixed (Appendix~\ref{app:hp_sensitivity}, \Cref{fig:ablations}). Fairness depends mostly on the independence weight $\rhof$: a larger $\rhof$ clearly improves $\Delta$DP, ABCC, and MCDP, while AP decreases only gradually. The weight $\lamf$ of the Fair Space has little effect on the results. The two weights of the Sensitive Space, $\lams$ and $\rhos$, mainly decide how many dimensions the mask assigns to the Sensitive Space, and larger values leave fewer dimensions in the Fair Space. Finally, the results remain stable for different kernel types and for different kernel bandwidths (Appendix~\ref{app:kernel_sensitivity}). \algname{} also tolerates partly missing sensitive labels: on CelebA-A, $\Delta$DP rises to $28.0$ with $30\%$ of labels missing, versus $52.8$ for ERM, while incorrect labels are harder, and $30\%$ of them remove the gain (Appendix~\ref{app:noisy_missing}).\looseness=-1

\section{Conclusion and Future Work}\label{sec:conclusion}
We propose \algname{}, a fair classification method that decomposes the embedding space into two subspaces: the Sensitive Space provides an explicit destination for sensitive information during training, while the Fair Space is isolated for prediction. Unlike methods that regularize predictions directly, \algname{} regularizes dependence at the representation level via HSIC, enabling a theoretical guarantee on demographic parity. Our experiments demonstrate fairness improvements across multiple metrics, including in multiclass settings with non-binary sensitive attributes, and these improvements extend to a fine-tuned ViT-B/16 and to frozen foundation models, where a Fair Space learned for one task keeps its fairness gains on a related, unseen task (\Cref{sec:pretrained}). By maintaining an isolated Sensitive Space rather than discarding sensitive information, \algname{} provides additional flexibility in choosing whether sensitive information should be included or discarded depending on the downstream task. A natural next step is to extend the theoretical guarantee beyond linear heads, using the characteristic kernels that \algname{} already applies in its HSIC regularization.

\begin{ack}
We want to thank the open source community, particularly the developers of PyTorch~\citep{Pytorch}, NumPy~\citep{numpy}, scikit-learn~\citep{sklearn_api}, and Matplotlib~\citep{matplotlib}, and the authors of FFB~\citep{ffb}, MCDP~\citep{DiffMCDP_MCDP_ICML}, and MMD-B-Fair~\citep{mmd-b-fair} for releasing their code.
\end{ack}

\bibliographystyle{abbrvnat}
\bibliography{library}

\clearpage
\appendix
\etocdepthtag.toc{mtappendix}
\etocsettagdepth{mtmain}{none}
\etocsettagdepth{mtappendix}{subsection}
\etocsettocstyle{\section*{Appendix}\vspace{-0.5\baselineskip}}{}
\tableofcontents
\newpage

\section{Notation Summary}
\label{app:notation}

\begin{table}[h!]
\centering
\caption{Summary of notation used in this paper.}
\label{tab:notation}
\renewcommand{\arraystretch}{0.95}
\begin{tabular}{ll}
\toprule
\textbf{Symbol} & \textbf{Description} \\
\midrule
\multicolumn{2}{l}{\textbf{Data and Labels}} \\
$\vx, \vx_i$ & Input features \\
$\y, \y_i$ & Class label $\in \{1, \ldots, K\}$ \\
$\vy, \vy_i$ & One-hot encoding of class label \\
$\sens, \sens_i$ & Sensitive attribute $\in \{1, \ldots, \nngroups\}$ \\
$\vs, \vs_i$ & One-hot encoding of sensitive attribute \\
$s_g$ & Indicator $\mathds{1}[\sens = g]$ for group $g$ \\
\midrule
\multicolumn{2}{l}{\textbf{Spaces, Sets, and Dimensions}} \\
$\xspace$ & Input space $\subseteq \real^d$ \\
$\zspace$ & Embedding space $\subseteq \real^m$ \\
$\mathcal{I}_k$ & Index set for samples with class label $k$ \\
$\mathcal{I}_g$ & Index set for samples with sensitive attribute $g$ \\
$N, n$ & Number of training samples, batch size \\
$\nngroups$ & Number of sensitive groups \\
$K$ & Number of classes \\
$m$ & Embedding dimensionality \\
$\dfair$ & Dimensionality of Fair Space ($= \|\vb\|_0$) \\
\midrule
\multicolumn{2}{l}{\textbf{Model Components}} \\
$\encoder$ & Encoder $f_\phi: \xspace \to \zspace$ \\
$\classifier$ & Classifier $g_\psi: \zspace \to \real^K$ \\
$\model$ & Full model $h_\theta = \classifier \circ (\vb \odot \encoder)$ \\
$\vb$ & Binary mask $\in \{0,1\}^m$ \\
$\vz$ & Embedding $= \encoder(\vx)$ \\
$\zfair$ & Fair Space $= \vb \odot \vz$ \\
$\zsens$ & Sensitive Space $= (\mathbf{1} - \vb) \odot \vz$ \\
\midrule
\multicolumn{2}{l}{\textbf{Batch Notation}} \\
$\mat{Z}$ & Batch of embeddings $\in \real^{n \times m}$ \\
$\Zfair, \Zsens$ & Batch in the Fair Space and in the Sensitive Space \\
$\mat{S}$ & Batch of one-hot sensitive attributes $\in \{0,1\}^{n \times \nngroups}$ \\
\midrule
\multicolumn{2}{l}{\textbf{Centroids and Means}} \\
$\cclass_k$ & Class centroid in full $\zspace$ (before masking) \\
$\cgroup_g$ & Group centroid in full $\zspace$ (before masking) \\
$\meanemb_g, \bar{\meanemb}$ & Group and overall mean output (proofs) \\
\midrule
\multicolumn{2}{l}{\textbf{Loss and Hyperparameters}} \\
$\losstask, \losscomp, \lossreg, \losstotal$ & Task, compression, regularization, and total losses \\
$\lamf, \lams$ & Compression loss weights \\
$\rhof, \rhos$ & HSIC regularization weights \\
$\sigma$ & Bandwidth of the Gaussian kernel \\
\midrule
\multicolumn{2}{l}{\textbf{Mask}} \\
$V_j^{(\vartagClass)}$ & Within-class variance along dim $j$ \\
$V_j^{(\vartagGroup)}$ & Within-group variance along dim $j$ \\
$b_j^*$ & Closed-form mask value for dim $j$ \\
$\alpha$ & Mask moving-average rate \\
\midrule
\multicolumn{2}{l}{\textbf{Fairness and Theory}} \\
$\ddp$ & Demographic parity gap of the logits (theory) \\
$p_g$ & Prevalence of group $g$: $\mathbb{P}(\sens = g)$ \\
$p_{\min} = p_{(1)} \leq p_{(2)}$ & Smallest and second-smallest group prevalences \\
$\Delta\pi$ & Gap between the groups' positive base rates \\
$c_\nngroups$ & Group constant of the DP bound (Equation~\ref{eq:cG}) \\
$\mat{W}, B$ & Weight matrix of the linear head and bound on its entries \\
\bottomrule
\end{tabular}
\end{table}

\section{Technical Preliminaries}
\label{app:preliminaries}

Here, we provide the definitions regarding the Hilbert-Schmidt Independence Criterion (HSIC) that are used in our method and in the subsequent proofs.

\subsection{Hilbert-Schmidt Independence Criterion (HSIC)}

The Hilbert-Schmidt Independence Criterion~\citep{gretton2005measuring} is a kernel-based measure of statistical dependence.

\begin{definition}[HSIC]
\label{def:hsic}
Let $\mathcal{H}_u$ and $\mathcal{H}_v$ be reproducing kernel Hilbert spaces (RKHS) of functions on domains $\mathcal{U}$ and $\mathcal{V}$, with kernels $k_u$ and $k_v$. For random variables $U \in \mathcal{U}$ and $V \in \mathcal{V}$, the HSIC is defined as
\begin{equation}
    \HSIC(U, V) := \|C_{UV}\|_{\mathrm{HS}}^2,
\end{equation}
where $C_{UV} = \expect\big[(k_u(U,\cdot) - \expect k_u(U,\cdot)) \otimes (k_v(V,\cdot) - \expect k_v(V,\cdot))\big]$ is the cross-covariance operator and $\|\cdot\|_{\mathrm{HS}}$ denotes the Hilbert-Schmidt norm.
\end{definition}

If both kernels are characteristic (e.g., Gaussian RBF), $\HSIC(U, V) = 0$ if and only if $U \perp V$.

\paragraph{Empirical Estimation.}
Given $n$ i.i.d.\ samples $\{(u_i, v_i)\}_{i=1}^n$, HSIC can be estimated empirically~\citep{gretton2005measuring} as:
\begin{equation}
    \widehat{\HSIC}(U, V) = (n-1)^{-2} \operatorname{tr}(\mat{K}_u \mat{H} \mat{K}_v \mat{H}),
\end{equation}
where $\mat{K}_u, \mat{K}_v \in \real^{n \times n}$ are kernel matrices with elements $(\mat{K}_u)_{ij} = k_u(u_i, u_j)$ and $(\mat{K}_v)_{ij} = k_v(v_i, v_j)$, and $\mat{H} = \mat{I} - \frac{1}{n}\mathbf{1}\mathbf{1}^\top$ is the centering matrix. On the embeddings we use a Gaussian kernel $k_u(\vct{u}, \vct{u}') = \exp\big(-\|\vct{u} - \vct{u}'\|_2^2 / (2\sigma^2 q)\big)$ for $\vct{u} \in \real^q$, with bandwidth $\sigma$, and on the one-hot sensitive attribute $\vs$ a linear kernel. In practice, following prior work~\citep{H-SPLID_NeurIPS, wang2021revisiting}, we use the normalized cross-covariance operator (NOCCO)~\citep{fukumizu_cond} to obtain a normalized HSIC penalty.

\begin{definition}[HSIC with Linear Kernels]
\label{def:linear_hsic}
Let $\vct{u} \in \real^q$ be a random vector and let $\vs$ be the one-hot encoding of $\sens$. With linear kernels on both spaces, HSIC reduces to:
\begin{equation}
    \HSIC(\vct{u}, \vs) := \|\mat{C}\|_F^2 = \sum_{g=1}^\nngroups \|\mathrm{Cov}(\vct{u}, s_g)\|_2^2,
\end{equation}
where $\mat{C} = \mathrm{Cov}(\vct{u}, \vs) \in \real^{q \times \nngroups}$ is the cross-covariance matrix, and $s_g = \mathds{1}[\sens = g]$ is the $g$-th component of $\vs$. Throughout our theoretical analysis, $\HSIC(\cdot, \cdot)$ denotes this instantiation.
\end{definition}

\section{Proof of Proposition~\ref{prop:mask_solution} (Closed-Form Mask)}
\label{app:proof_mask}

\begin{proof}
Since the mask applies element-wise, the two terms of $\losscomp$ with equal weights decompose across dimensions. For each $j \in \{1, \dots, m\}$, we solve:
\begin{equation}
    \min_{b_j \in [0,1]} \; b_j^2 V_j^{(\vartagClass)} + (1 - b_j)^2 V_j^{(\vartagGroup)},
\end{equation}
where $V_j^{(\vartagClass)} = \sum_{k=1}^K \sum_{i \in \mathcal{I}_k} (\symbZ_{ij} - \cclass_{kj})^2$ and $V_j^{(\vartagGroup)} = \sum_{g=1}^\nngroups \sum_{i \in \mathcal{I}_g} (\symbZ_{ij} - \cgroup_{gj})^2$ are the within-class and within-group variances of dimension $j$. The first measures how much dimension $j$ varies among samples of the same class, the second how much it varies among samples of the same sensitive group. Let $J(b_j)$ denote this objective. Taking the derivative:
\begin{equation}
    \frac{\partial J}{\partial b_j} = 2 b_j V_j^{(\vartagClass)} - 2 (1 - b_j) V_j^{(\vartagGroup)}.
\end{equation}
Setting to zero and solving for $b_j$:
\begin{equation}
    b_j V_j^{(\vartagClass)} = (1 - b_j) V_j^{(\vartagGroup)} \quad \Rightarrow \quad b_j^* = \frac{V_j^{(\vartagGroup)}}{V_j^{(\vartagClass)} + V_j^{(\vartagGroup)}}.
\end{equation}
Since $V_j^{(\vartagClass)}, V_j^{(\vartagGroup)} \geq 0$ are not both zero, the objective is strictly convex, so $b_j^*$ is its unique minimizer, and since $b_j^* \in [0,1]$, it is also the minimizer over $[0,1]$. Thus $b_j^*$ is close to $1$, assigning dimension $j$ to the Fair Space, when $V_j^{(\vartagClass)}$ is small relative to $V_j^{(\vartagGroup)}$.
\end{proof}

\section{Proof of Theorem~\ref{thm:dp-hsic} (Main Result)}
\label{app:proof_main_theorem}

In this section, we prove the demographic parity bound (Theorem~\ref{thm:dp-hsic}). The proof follows two steps:
\begin{enumerate}
    \item We first connect the demographic parity gap $\ddp$ of an arbitrary predictor to the HSIC between its outputs and the sensitive attribute (Lemma~\ref{lem:dp-cov-ident}).
    \item We then decompose this dependence through the masked linear classification head to conclude the proof of Theorem~\ref{thm:dp-hsic}.
\end{enumerate}

\subsection{Connecting the Demographic Parity to HSIC}

\begin{lemma}[Multi-Group Covariance-Demographic Parity Bound]
\label{lem:dp-cov-ident}
For a categorical sensitive attribute $\sens \in \{1, \ldots, \nngroups\}$ with group prevalences $p_g = \mathbb{P}(\sens=g)$, the two smallest of which are $p_{(1)} \leq p_{(2)}$ with $p_{(1)} > 0$, and any measurable function $h: \xspace \to \real^K$ with $\expect[\|h(\vx)\|_2] < \infty$:
\begin{equation}
\label{eq:cG}
    \ddp(h) \leq c_\nngroups \sqrt{\HSIC(h(\vx), \vs)},
    \qquad
    c_\nngroups = \sqrt{\frac{(\nngroups-1)\big(p_{(1)}^{-2} + p_{(2)}^{-2}\big) + 2\,p_{(1)}^{-1}p_{(2)}^{-1}}{\nngroups}},
\end{equation}
where $\vs$ is the one-hot encoding of $\sens$.
\end{lemma}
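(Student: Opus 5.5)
We first rewrite the group-conditional means in terms of covariances with the group indicators. Let $\meanemb_g = \expect[h(\vx)\mid \sens=g]$ and $\bar{\meanemb} = \expect[h(\vx)]$; integrability of $h$ makes these well defined. Since $s_g = \mathds{1}[\sens=g]$ has mean $p_g$, we have $\mathrm{Cov}(h(\vx), s_g) = \expect[h(\vx) s_g] - p_g \bar{\meanemb} = p_g(\meanemb_g - \bar{\meanemb})$. Writing $\vct{c}_g := \mathrm{Cov}(h(\vx), s_g)$, this gives $\meanemb_g - \bar{\meanemb} = \vct{c}_g/p_g$. For any pair $g \neq g'$ the overall mean cancels:
\begin{equation*}
\meanemb_g - \meanemb_{g'} = \frac{\vct{c}_g}{p_g} - \frac{\vct{c}_{g'}}{p_{g'}} .
\end{equation*}
By Definition~\ref{def:linear_hsic}, $\HSIC(h(\vx),\vs) = \sum_{g} \|\vct{c}_g\|_2^2$. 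The task is therefore a finite-dimensional inequality: bound $\|\vct{c}_g/p_g - \vct{c}_{g'}/p_{g'}\|_2^2$ by $c_\nngroups^2 \sum_{h'} \|\vct{c}_{h'}\|_2^2$.

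The key extra structure is the linear constraint $\sum_{g} \vct{c}_g = \mathrm{Cov}(h(\vx), \sum_g s_g) = \mathrm{Cov}(h(\vx), 1) = 0$. Without it, the best constant would be $\sqrt{p_g^{-2}+p_{g'}^{-2}}$, which is already at most $\sqrt{2}/p_{\min}$. With the constraint one can do better, and this is where the $\nngroups$-dependent form of $c_\nngroups$ should come from. Fix a pair $(g,g')$ and maximize the convex quadratic $\|\vct{c}_g/p_g - \vct{c}_{g'}/p_{g'}\|^2$ subject to $\sum_h \|\vct{c}_h\|^2 = 1$ and $\sum_h \vct{c}_h = 0$.

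By symmetry, the optimum has all coordinates along a single direction $\vct{u}$, say $\vct{c}_h = a_h \vct{u}$. The coordinates other than $g,g'$ should then be equal, $a_h = -(a_g+a_{g'})/(\nngroups-2)$, so as to absorb the sum constraint at minimal norm cost. This reduces the problem to a $2\times 2$ Rayleigh quotient. Its maximal eigenvalue is an explicit expression in $p_g^{-1}$, $p_{g'}^{-1}$ and $\nngroups$, and I expect it to match $c_\nngroups^2$ with $(p_{(1)},p_{(2)})$ in place of $(p_g,p_{g'})$. The vector case reduces to the scalar case coordinatewise, since both sides are sums over the $K$ output coordinates of the same quadratic forms.

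The final step is to maximize over pairs. The bound is increasing in $p_g^{-1}$ and $p_{g'}^{-1}$, so the worst pair is the two smallest prevalences; this gives $\ddp(h) \le c_\nngroups\sqrt{\HSIC}$.

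The main obstacle is the exact eigenvalue computation. It has to land exactly on the stated $c_\nngroups$, and the claimed expression may instead be a Cauchy–Schwarz bound rather than the exact maximum. As a fallback, I would eliminate the constraint directly. Write $\vct{c}_g - \vct{c}_{g'}$ type combinations via the constraint, apply Cauchy–Schwarz with weights chosen to average the bound over the $\nngroups$ ways of distributing the mass, and check that this yields $\big[(\nngroups-1)(p_g^{-2}+p_{g'}^{-2}) + 2p_g^{-1}p_{g'}^{-1}\big]/\nngroups$. Separately, the crude bound $\sqrt{2}/p_{\min}$ used in Theorem~\ref{thm:dp-hsic} follows regardless, from $\|\vct{a}-\vct{b}\|^2 \le 2\|\vct{a}\|^2+2\|\vct{b}\|^2$.
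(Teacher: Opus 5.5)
Your plan is correct and is essentially the paper's proof: the identity $\mathrm{Cov}(h(\vx),s_g)=p_g(\meanemb_g-\bar{\meanemb})$, the zero-sum constraint with the remaining $\nngroups-2$ groups absorbing $-(a_g+a_{g'})$ in equal shares at cost $(a_g+a_{g'})^2/(\nngroups-2)$, the resulting $2\times 2$ quadratic problem, and monotonicity over pairs are exactly the paper's Steps 1--5. The only difference is that the paper works with the norms $t,t'$ via the triangle and reverse triangle inequalities rather than coordinatewise, and it arrives at the same quadratic form. The computation you left open closes exactly, so no fallback is needed: with $\mat{M}=\mat{I}_2+\frac{1}{\nngroups-2}\mathbf{1}\mathbf{1}^\top$ one has $\mat{M}^{-1}=\mat{I}_2-\frac{1}{\nngroups}\mathbf{1}\mathbf{1}^\top$, so for $\vct{w}=(p_g^{-1},-p_{g'}^{-1})$ the maximal quotient is $\vct{w}^\top\mat{M}^{-1}\vct{w}=p_g^{-2}+p_{g'}^{-2}-(p_g^{-1}-p_{g'}^{-1})^2/\nngroups$, which is the stated $c_\nngroups^2$ for that pair and is the exact maximum of your finite-dimensional problem, not merely a Cauchy--Schwarz upper bound. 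The case $\nngroups=2$, where your formula divides by zero, must be treated separately via the forced relation $a_{g'}=-a_g$, and it gives the same value $(p_g^{-1}+p_{g'}^{-1})^2/2$.
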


\begin{proof}
Let $\meanemb_g := \expect[h(\vx) \mid \sens = g]$ and $\bar{\meanemb} := \expect[h(\vx)] = \sum_{g=1}^\nngroups p_g \meanemb_g$.

\textbf{Step 1. Compute covariance with each group indicator.}
For each indicator $s_g = \mathds{1}[\sens=g]$:
\begin{align}
    \mathrm{Cov}(h(\vx), s_g) &= \expect[h(\vx) \cdot s_g] - \expect[h(\vx)] \cdot \expect[s_g]
    = p_g \meanemb_g - p_g \bar{\meanemb} = p_g (\meanemb_g - \bar{\meanemb}).
\end{align}

\textbf{Step 2. Express HSIC.}
Under linear kernels (Definition~\ref{def:linear_hsic}):
\begin{equation}
    \HSIC(h(\vx), \vs) = \sum_{g=1}^\nngroups \|\mathrm{Cov}(h(\vx), s_g)\|_2^2 = \sum_{g=1}^\nngroups p_g^2 \|\meanemb_g - \bar{\meanemb}\|_2^2.
\end{equation}

\textbf{Step 3. Bound the gap between two groups.}
Fix two groups $g \neq g'$ and write $t = p_g\|\meanemb_g - \bar{\meanemb}\|_2$ and $t' = p_{g'}\|\meanemb_{g'} - \bar{\meanemb}\|_2$, so that $t^2$ and $t'^2$ are their terms in the sum of Step 2. By the triangle inequality,
\begin{equation}
    \|\meanemb_g - \meanemb_{g'}\|_2 \leq \|\meanemb_g - \bar{\meanemb}\|_2 + \|\meanemb_{g'} - \bar{\meanemb}\|_2 = \frac{t}{p_g} + \frac{t'}{p_{g'}}.
\end{equation}

\textbf{Step 4. Relate the two groups to the remaining groups.}
Since $\bar{\meanemb}$ is the weighted average of the group means, the weighted deviations $p_h(\meanemb_h - \bar{\meanemb})$ sum to zero over all groups. If two groups deviate from $\bar{\meanemb}$ in the same direction, the other groups must deviate in the opposite direction. Their deviations are also terms of the sum in Step 2, so they increase $\HSIC$. For $\nngroups = 2$, this forces $t = t'$, so $\HSIC(h(\vx), \vs) = 2t^2$. For $\nngroups \geq 3$, the other $\nngroups - 2$ groups must together offset the two, and by the Cauchy-Schwarz inequality and the reverse triangle inequality, $\|p_g(\meanemb_g - \bar{\meanemb}) + p_{g'}(\meanemb_{g'} - \bar{\meanemb})\|_2 \geq |t - t'|$, their terms in Step 2 add at least $(t - t')^2/(\nngroups - 2)$, so
\begin{equation}
    \HSIC(h(\vx), \vs) \geq t^2 + t'^2 + \frac{(t - t')^2}{\nngroups - 2}.
\end{equation}

\textbf{Step 5. Bound the demographic parity gap.}
By Step 3, the gap between groups $g$ and $g'$ is at most $t/p_g + t'/p_{g'}$, and by Step 4, $t$ and $t'$ cannot both be large for a given $\HSIC$. For $\nngroups \geq 3$, Step 4 constrains $(t, t')$ by a quadratic form, and maximizing the linear function $t/p_g + t'/p_{g'}$ under it is a direct application of the Cauchy-Schwarz inequality, using the inverse $\frac{1}{\nngroups}\bigl(\begin{smallmatrix} \nngroups-1 & 1 \\ 1 & \nngroups-1 \end{smallmatrix}\bigr)$ of the matrix of that form. For $\nngroups = 2$, Step 4 gives $t = t'$ directly. The largest possible value of $t/p_g + t'/p_{g'}$ is therefore reached when $t$ and $t'$ are proportional to
\begin{equation}
    \frac{\nngroups-1}{p_g} + \frac{1}{p_{g'}} \quad\text{and}\quad \frac{1}{p_g} + \frac{\nngroups-1}{p_{g'}},
\end{equation}
and equals $c_\nngroups\sqrt{\HSIC(h(\vx), \vs)}$ when $g, g'$ are the two rarest groups. Any other pair of groups gives no larger value, so
\begin{equation}
    \ddp(h) \leq c_\nngroups \sqrt{\HSIC(h(\vx), \vs)}.
\end{equation}
Finally, $c_\nngroups \leq \sqrt{2}/p_{(1)}$, since $2p_{(1)}^{-1}p_{(2)}^{-1} \leq p_{(1)}^{-2} + p_{(2)}^{-2}$ and $p_{(2)}^{-1} \leq p_{(1)}^{-1}$.
\end{proof}

\subsection{Proof of the Main Theorem}

\begin{proof}[Proof of Theorem~\ref{thm:dp-hsic}]
Throughout, the model computes $\model(\vx) = \mat{W}\zfair$, where $\zfair = \vb \odot \encoder(\vx)$ is the Fair Space and $\mat{W} \in \real^{K \times m}$ is the weight matrix of the linear classification head.

\textbf{Step 1. Apply Lemma~\ref{lem:dp-cov-ident}.}
Since $\norm{\mat{W}\zfair}_2 \leq \norm{\mat{W}}_F \norm{\zfair}_2$, we have $\expect[\norm{\model(\vx)}_2] \leq \norm{\mat{W}}_F \, \expect[\norm{\zfair}_2] < \infty$, so Lemma~\ref{lem:dp-cov-ident} applies to $\model$:
\begin{equation}
    \ddp(\model) \leq c_\nngroups \sqrt{\HSIC(\model(\vx), \vs)}.
\end{equation}

\textbf{Step 2. Decompose through the linear classification head.}
Since $\model(\vx) = \mat{W}\zfair$ is linear in $\zfair$:
\begin{equation}
    \mathrm{Cov}(\model(\vx), s_g) = \mat{W} \, \mathrm{Cov}(\zfair, s_g).
\end{equation}

Let $\mathcal{A} = \{j : b_j = 1\}$ with $|\mathcal{A}| = \dfair$, let $\vz_{\mathcal{A}}$ denote $\zfair$ restricted to the active dimensions, and let $\mat{W}_{\mathcal{A}} \in \real^{K \times \dfair}$ be the submatrix of $\mat{W}$ with the columns in $\mathcal{A}$. Since the inactive dimensions of $\zfair$ are identically zero, their covariances with $s_g$ vanish, so $\mat{W}\,\mathrm{Cov}(\zfair, s_g) = \mat{W}_{\mathcal{A}}\,\mathrm{Cov}(\vz_{\mathcal{A}}, s_g)$. Collecting these columns into $\mat{C}_{\mathcal{A}} \in \real^{\dfair \times \nngroups}$ and summing over $g$ (Definition~\ref{def:linear_hsic}):
\begin{equation}
    \HSIC(\model(\vx), \vs) = \|\mat{W}_{\mathcal{A}} \mat{C}_{\mathcal{A}}\|_F^2.
\end{equation}

\textbf{Step 3. Apply norm bounds.}
Using the submultiplicativity of the Frobenius norm\footnote{The inequality $\|\mat{A}\mat{B}\|_F \leq \|\mat{A}\|_F \|\mat{B}\|_F$ follows from applying Cauchy-Schwarz to each entry of the product.}:
\begin{equation}
    \|\mat{W}_{\mathcal{A}} \mat{C}_{\mathcal{A}}\|_F \leq \|\mat{W}_{\mathcal{A}}\|_F \cdot \|\mat{C}_{\mathcal{A}}\|_F.
\end{equation}

We bound the two terms separately:
\begin{enumerate}
    \item \textbf{Weight Matrix:} Since $|W_{ij}| \leq B$ for all entries (as $\|\mat{W}\|_{\max} \leq B$) and $\mat{W}_{\mathcal{A}}$ has dimensions $K \times \dfair$:
    \begin{equation}
        \|\mat{W}_{\mathcal{A}}\|_F = \sqrt{\sum_{i=1}^K \sum_{j \in \mathcal{A}} W_{ij}^2} \leq \sqrt{K \cdot \dfair \cdot B^2} = B\sqrt{K\dfair}.
    \end{equation}

    \item \textbf{Covariance Matrix:} Since the inactive dimensions of $\zfair$ contribute zero covariance, $\|\mat{C}_{\mathcal{A}}\|_F^2$ equals the full sum of squared covariances in Definition~\ref{def:linear_hsic}. Therefore:
    \begin{equation}
        \|\mat{C}_{\mathcal{A}}\|_F = \sqrt{\HSIC(\zfair, \vs)}.
    \end{equation}
\end{enumerate}

Combining these bounds with Step 2 yields:
\begin{equation}
    \sqrt{\HSIC(\model(\vx), \vs)} \leq B\sqrt{K\dfair} \cdot \sqrt{\HSIC(\zfair, \vs)}.
\end{equation}
Substituting this into the inequality from Step 1 completes the proof.
\end{proof}

\pagebreak

\section{Algorithm Pseudocode}
\label{app:algorithm}

\Cref{alg:fairsplit} gives the full pseudocode of \algname{} (Section~\ref{sec:methodology}). For each minibatch, it alternates two steps: (i) a gradient step on the model parameters $\allparams$, with moving-average updates of the centroids, and (ii) a closed-form update of the mask (Proposition~\ref{prop:mask_solution}).

\begin{algorithm}[h]
\caption{\algname{}}
\label{alg:fairsplit}
\begin{algorithmic}[1]
\State \textbf{Input:} Dataset $\mathcal{D}$, hyperparameters $\lamf, \lams, \rhof, \rhos$, moving-average rate $\alpha$, learning rate $\eta$, number of iterations $T$
\State \textbf{Initialize:} Model parameters $\allparams^{(0)}$, global centroids $\cclass$ (class) and $\cgroup$ (group), mask $\vb^{(0)}$ via Equation~\ref{eq:mask_solution}, binarized at $0.5$

\For{iteration $t = 1$ to $T$}
    \State Sample minibatch $\mathcal{B} = \{(\vx_i, \y_i, \sens_i)\} \subset \mathcal{D}$

    \Statex \textcolor{gray}{// --- Step 1: Representation update ---}
    \State Compute embeddings $\vz_i = \encoder(\vx_i)$ for $i \in \mathcal{B}$
    \State Compute batch means: $\tilde{\cclass}_k$ (class), $\tilde{\cgroup}_g$ (group)
    \State Update global centroids $\cclass_k, \cgroup_g$ with batch means
    \State Compute loss $\losstotal(\mathcal{B}; \allparams, \vb)$
    \State Update $\allparams \leftarrow \allparams - \eta \nabla_{\allparams} \losstotal$

    \Statex \textcolor{gray}{// --- Step 2: Mask update (fix $\allparams$) ---}
    \State Detach the embeddings $\vz_i$ (stop gradients)
    \For{dimension $j = 1$ to $m$}
        \State $V_j^{(\vartagClass)} \leftarrow \sum_{k} \sum_{i \in \mathcal{I}_k} (\symbZ_{ij} - \cclass_{kj})^2$
        \State $V_j^{(\vartagGroup)} \leftarrow \sum_{g} \sum_{i \in \mathcal{I}_g} (\symbZ_{ij} - \cgroup_{gj})^2$

        \State Compute closed-form mask value:
        \State $b_j^* \leftarrow \frac{V_j^{(\vartagGroup)}}{V_j^{(\vartagClass)} + V_j^{(\vartagGroup)}}$

        \State Update mask:
        \State Smooth $b_j^*$ with a moving average of rate $\alpha$ and binarize it at $0.5$ to obtain $b_j$
    \EndFor
\EndFor
\State \textbf{Return:} $\allparams$, $\vb$
\end{algorithmic}
\end{algorithm}

\section{Reproducibility and Experimental Details}
\label{app:reproducibility}
\subsection{Datasets}
\label{app:datasets}

We evaluate on five common fairness datasets: Adult and COMPAS as tabular settings, CelebA-A and UTKFace-AR as image settings, and FairFace for the ViT-B/16 experiment. \Cref{tab:datasets} summarizes all five datasets. The COMPAS sensitive attribute is intersectional (gender~$\times$~age), yielding four groups with $p_{\min}\!=\!0.05$, and UTKFace-AR uses 3-class age prediction with 3-group race as the sensitive attribute. For the experiments with large pretrained models (\Cref{sec:pretrained}), we use FairFace at $224 \times 224$ with a fine-tuned ViT-B/16, and CelebA at $224 \times 224$ with frozen foundation models.

\begin{table}[t]
\caption{Dataset statistics. $K$ and $\nngroups$ denote the number of target classes and sensitive groups, respectively. Target Dist.\ shows the class distribution, and Sens.\ Dist.\ shows the sensitive group distribution. FairFace is used in \Cref{sec:pretrained}, where each seed samples $60{,}000$ training and $10{,}000$ test images. Groups are ordered Female/Male (Adult, CelebA-A), White/Black/Other (UTKFace-AR), Male/Female (FairFace), and Male-Older/Male-Young/Female-Older/Female-Young (COMPAS).}
\label{tab:datasets}
\centering
\resizebox{\textwidth}{!}{%
\begin{sc}
\begin{tabular}{lllrcccc}
\toprule
Name & Modality & Target & Samples & $K$ & $\nngroups$ & Target Dist.\ (\%) & Sens.\ Dist.\ (\%) \\
\midrule
Adult & Tabular & Income & 45,222 & 2 & 2 & 75.2 / 24.8 & 32.5 / 67.5 \\
COMPAS & Tabular & 2-year recidivism & 5,278 & 2 & 4 & 53.0 / 47.0 & 58.8 / 21.7 / 14.6 / 5.0 \\
CelebA-A & Image & Attractive & 202,599 & 2 & 2 & 48.7 / 51.3 & 58.3 / 41.7 \\
UTKFace-AR & Image & Age (3-class) & 23,705 & 3 & 3 & 32.2 / 44.1 / 23.7 & 42.5 / 19.1 / 38.4 \\
\midrule
FairFace & Image & Age $\geq 30$ & 97,698 & 2 & 2 & 54.2 / 45.8 & 53.0 / 47.0 \\
\bottomrule
\end{tabular}
\end{sc}
}
\end{table}

\textbf{Adult}~\citep{dataset_adult,adult_2} is a tabular dataset where the task is to predict whether an individual's annual income exceeds \$50K. The sensitive attribute is gender (Female/Male). The dataset is hosted by the UCI Machine Learning Repository\footnote{\url{https://archive.ics.uci.edu/dataset/2/adult}} and is distributed under the Creative Commons Attribution 4.0 International (CC BY 4.0) license.

\textbf{COMPAS}~\citep{dataset_compas} is a tabular dataset where the task is to predict whether a defendant will reoffend within two years (binary recidivism). We apply the standard ProPublica preprocessing and restrict the data to African-American and Caucasian defendants. The sensitive attribute is intersectional, combining gender (Female/Male) with a binarized age group (Young: $\leq 25$, Older: $> 25$), yielding four groups with minimum prevalence $p_{\min}\!=\!0.05$. The data is released by ProPublica\footnote{Source: \url{https://github.com/propublica/compas-analysis}. Terms: \url{https://projects.propublica.org/datastore/terms/}.} under the ProPublica Data Store terms of use.

\textbf{CelebA-A}~\citep{dataset_celeba} consists of celebrity face images where we predict the ``Attractive'' attribute. The sensitive attribute is gender (Female/Male). Images are resized to $48 \times 48$ RGB. For the frozen foundation-model experiments (Appendix~\ref{app:scaling}), we instead use CelebA at $224 \times 224$ resolution, learn the Fair Space for predicting high cheekbones, and transfer it to predicting age (the \texttt{Young} attribute), with gender again as the sensitive attribute. We use the official split and hold out $10\%$ of the training images for validation ($146{,}493$ / $16{,}277$ / $19{,}962$ train / validation / test images). For CLIP ViT-L/14 and CLIP ResNet-50~\citep{radford2021clip}, we use the precomputed image features released by~\citet{dehdashtian2024fairerclip}, and for DINOv2 ViT-L/14~\citep{oquab2023dinov2} we extract features from the aligned images. Per the agreement on the official project page\footnote{\url{https://mmlab.ie.cuhk.edu.hk/projects/CelebA.html}}, the dataset is available for non-commercial research purposes only.

\textbf{UTKFace-AR}~\citep{dataset_utkface} contains face images with age and ethnicity annotations. We bin age into three groups: Young ($\leq 25$), Middle-aged ($26$-$45$), and Old ($> 45$), which yields reasonably balanced classes (32\%, 44\%, 24\%). The sensitive attribute is race, grouped into three categories: White, Black, and Other (Asian, Indian, and Others combined). Images are $48 \times 48$ grayscale, replicated to three channels. Per the terms on the official project page\footnote{\url{https://susanqq.github.io/UTKFace/}}, the dataset is available for non-commercial research purposes only.

\textbf{FairFace}~\citep{karkkainen2021fairface} is a face image dataset with a balanced race composition. We use it for the end-to-end ViT-B/16 experiment (Appendix~\ref{app:scaling}) at its native $224 \times 224$ resolution, predicting whether age is at least $30$, with gender as the sensitive attribute. For each seed, we sample $60{,}000$ training images from the official training split and $10{,}000$ test images from the official validation split. The dataset is released under the CC BY 4.0 license.

\subsection{Metrics}
\label{app:metrics}

Table~\ref{tab:metrics} summarizes the evaluation metrics used in our experiments. All values are reported in percent. The hard prediction $\hat{y}$ is obtained by thresholding the sigmoid output at $0.5$ for binary classification and by taking the argmax of the softmax output for multiclass. The threshold-free metrics (AP, ABCC, MCDP) operate directly on predicted probabilities. For datasets with a binary target and more than two groups (COMPAS), we use the variants from the right column with the $k$ index dropped, so each fairness metric becomes the maximum disparity across all pairs of sensitive groups.

\begin{table}[t]
\caption{Performance and fairness metrics. $\hat{y} \in \{1,\ldots,K\}$ is the predicted class, $y$ the true class, and $\sens \in \{1,\ldots,\nngroups\}$ the sensitive attribute, with class $2$ as the positive class for binary targets. $P$ and $R$ are precision and recall, and $P_r, R_r$ their values at the $r$-th threshold. With $\mathbb{P}_g(\cdot) = \mathbb{P}(\cdot \mid \sens=g)$, $\mathrm{TPR}_g^{(k)} = \mathbb{P}_g(\hat{y}{=}k \mid y{=}k)$, $\mathrm{FPR}_g^{(k)} = \mathbb{P}_g(\hat{y}{=}k \mid y{\neq}k)$, and $F_g^{(k)}$ the CDF of the predicted probability of class $k$ in group $g$, $\Delta(\cdot)_{g,g'}$ denotes the difference between groups $g$ and $g'$. Binary metrics drop the indices.}
\label{tab:metrics}
\centering
\small
\setlength{\tabcolsep}{4pt}
\renewcommand{\arraystretch}{1.15}
\begin{tabular}{@{}llll@{}}
\toprule
\textbf{Metric} & \textbf{Name} & \textbf{Binary} & \textbf{Multiclass / Multi-group} \\
\midrule
AP $\uparrow$ & Average Precision & $\sum_{r} (R_r - R_{r-1}) P_r$ & -- \\
F1 $\uparrow$ & F1 Score & $2PR/(P + R)$ & $\frac{1}{K}\sum_{k} 2P_kR_k/(P_k + R_k)$ \\
\midrule
$\Delta$DP $\downarrow$ & Demographic Parity & $|\Delta\mathbb{P}(\hat{y}{=}2)|$ & $\max_{k,g,g'} |\Delta\mathbb{P}_{g,g'}(\hat{y}{=}k)|$ \\
$\Delta$EOpp $\downarrow$ & Equal Opportunity & $|\Delta\mathrm{TPR}|$ & $\max_{k,g,g'} |\Delta\mathrm{TPR}_{g,g'}^{(k)}|$ \\
$\Delta$EOdd $\downarrow$ & Equalized Odds & $|\Delta\mathrm{TPR}| + |\Delta\mathrm{FPR}|$ & $\max_{k,g,g'} \big(|\Delta\mathrm{TPR}_{g,g'}^{(k)}| + |\Delta\mathrm{FPR}_{g,g'}^{(k)}|\big)$ \\
ABCC $\downarrow$ & Area Between CDF Curves & $\int_0^1 |\Delta F(t)| \,\mathrm{d}t$ & $\max_{k,g,g'} \int_0^1 |\Delta F_{g,g'}^{(k)}(t)| \,\mathrm{d}t$ \\
MCDP $\downarrow$ & Maximum CDF Disparity & $\max_t |\Delta F(t)|$ & $\max_{k,g,g',t} |\Delta F_{g,g'}^{(k)}(t)|$ \\
\bottomrule
\end{tabular}
\end{table}

\subsection{Implementation}
Our implementation builds upon the code of \citet{ffb}\footnote{\url{https://github.com/ahxt/fair_fairness_benchmark}} and \citet{DiffMCDP_MCDP_ICML}\footnote{\url{https://github.com/mitao-cat/icml24_mcdp}}, and adapts the mask and HSIC components from H-SPLID~\citep{H-SPLID_NeurIPS}\footnote{\url{https://github.com/neu-spiral/H-SPLID}}. We port MMD-B-Fair~\citep{mmd-b-fair} from its original repository\footnote{\url{https://github.com/namratadeka/mmd-b-fair}} into the FFB benchmark setup. Architecture details (MLP encoder for tabular settings, ResNet-18 backbone for image settings) and the linear classification head are described in \Cref{sec:hyperparameters}.

\subsection{Software and Hardware Setup}
\label{app:hardware}
Our implementation is written in Python using the PyTorch framework. The source code is publicly available at \codelink{}. All experiments were conducted on a single internal server running Ubuntu 22.04.3 LTS. The server is equipped with 64 CPUs, 512~GiB of system memory, and a single NVIDIA A100 GPU with 80~GB of VRAM.

\subsection{Hyperparameters}
\label{sec:hyperparameters}

For binary classification with binary sensitive attributes (Adult, CelebA-A), we follow the in-processing fair classification benchmark of~\citet{ffb}. For non-binary sensitive attributes (UTKFace-AR, COMPAS) and the multiclass target of UTKFace-AR, we make the following extensions. Baselines restricted to binary targets and sensitive attributes are excluded, namely FairMixup, DRAlign, DiffMCDP, PRemover, and MMD-B-Fair (see \Cref{tab:method_capability_summary}). For UTKFace-AR, performance is reported as F1 rather than AP, and val~F1 replaces val~AP in the $\geq 95\%$-of-ERM gate of the selection rule. Fairness metrics are extended to multiple classes and groups by taking the maximum disparity across all classes and all pairs of sensitive groups (formal definitions in \Cref{tab:metrics}).

For all experiments, we use the Adam optimizer~\cite{adam_optimizer}. In tabular experiments, we employ an MLP encoder with hidden layers $[512, 256, 64]$. For image experiments, we use a ResNet-18 backbone pretrained on ImageNet, where the final fully connected layer is modified to match the target dimensionality. Images are resized to $48 \times 48$, following the FFB benchmark~\citep{ffb}, and pixel values are scaled to $[0, 1]$. For the frozen backbones, the encoder is a projection on top of the precomputed features, an MLP with hidden layers $[1024, 512, 256]$ and no dropout, so the embedding has $m{=}256$ dimensions.~\looseness=-1

On Adult, COMPAS, CelebA-A, and UTKFace-AR, we use a learning rate of $1 \times 10^{-3}$ and a \texttt{StepLR} scheduler with step size $50$ and decay factor $\gamma=0.1$. Training is conducted for $150$ steps in total, following the FFB protocol~\citep{ffb}. In our preliminary experiments, we observed that the fairness regularizers worked better when added to a classifier that had already learned the task than when applied from the start. We therefore train every fairness method with the classification loss only for the first $50$ steps, after which its fairness objective is added and the optimizer and its learning-rate schedule are reset. ERM is trained without this warm-up. The $224 \times 224$ experiments (Appendix~\ref{app:scaling}) use longer schedules. The ViT-B/16 is fine-tuned for $300$ steps with a learning rate of $10^{-4}$, decayed every $100$ steps, and a batch size of $128$. The projection on top of the frozen backbones is trained for $600$ steps with a learning rate of $3 \times 10^{-4}$, decayed every $200$ steps, and a batch size of $1024$. In both cases, \algname{} also starts with $50$ steps of the classification loss only.

\begin{table}[h]
\centering
\caption{Training hyperparameters shared by all methods. The mask moving-average rate applies to \algname{} only.}
\label{tab:app_hyperparams}
\small
\begin{tabular}{@{}lccc@{}}
\toprule
\textbf{Hyperparameter} & \textbf{Benchmarks} & \textbf{ViT-B/16} & \textbf{Frozen backbones} \\
\midrule
Optimizer & Adam & Adam & Adam \\
Learning rate & $10^{-3}$ & $10^{-4}$ & $3 \times 10^{-4}$ \\
StepLR step size ($\gamma{=}0.1$) & 50 & 100 & 200 \\
Training steps & 150 & 300 & 600 \\
Warm-up steps (fairness methods) & 50 & 50 & 50 \\
Batch size & 1024 (tabular), 128 (image) & 128 & 1024 \\
Mask moving-average rate $\alpha$ & 0.99 & 0.99 & 0.99 \\
\bottomrule
\end{tabular}
\end{table}

For each fairness method, we run a grid search tuning all of its method-specific hyperparameters, with per-dataset settings differing only in encoder, batch size, and target and sensitive attributes. The selected configuration per (method, dataset) is in \Cref{tab:best-baselines} for baselines and in \Cref{tab:fairsplit-params} for \algname{}.~\looseness=-1

\begin{table}[h]
\caption{Selected best hyperparameters per baseline method and dataset. Among the configurations whose validation AP (validation F1 for UTKFace-AR) reaches at least $95\%$ of ERM's, we select the one with the lowest mean validation $\Delta$DP (see \Cref{sec:experiments}). Cells marked ``---'' indicate the method is not applicable to the dataset's setting.}
\label{tab:best-baselines}
\centering
\small
\begin{tabular}{@{}llcccc@{}}
\toprule
\textbf{Method} & \textbf{Parameter} & \textbf{Adult} & \textbf{CelebA-A} & \textbf{UTKFace-AR} & \textbf{COMPAS} \\
\midrule
AdvDebias & $\lambda$ & $2$ & $0.2$ & $0.35$ & $1$ \\
\midrule
FairMixup & $\lambda$ & $0.8$ & $1$ & --- & --- \\
\midrule
DRAlign & $\lambda$ & $0.5$ & $0.2$ & --- & --- \\
\quad & $\beta$ & $0.01$ & $0.01$ & --- & --- \\
\midrule
HSIC$(Y,S)$ & $\lambda$ & $70$ & $50$ & $300$ & $200$ \\
\quad & $\sigma$ & $1$ & $5$ & $5$ & $1$ \\
\midrule
DiffMCDP & $\lambda$ & $0.5$ & $0.3$ & --- & --- \\
\quad & $\tau$ & $5$ & $20$ & --- & --- \\
\midrule
PRemover & $\lambda$ & $0.005$ & $0.025$ & --- & --- \\
\midrule
MMD-B-Fair & $\lambda$ & $0.5$ & $0.2$ & --- & --- \\
\quad & $\sigma$ range & $(0.5,2.0)$ & $(0.7,3.0)$ & --- & --- \\
\bottomrule
\end{tabular}
\end{table}

\begin{table}[h]
\caption{Selected \algname{} hyperparameters per dataset, with the same selection rule as in \Cref{tab:best-baselines}. $\sigma$ is fixed at the modality default (tabular vs.\ image). For the frozen backbones, the hyperparameters are selected on CLIP ViT-L/14 and reused unchanged on the other two backbones.}
\label{tab:fairsplit-params}
\centering
\small
\begin{tabular}{@{}lcccccc@{}}
\toprule
\textbf{Parameter} & \textbf{Adult} & \textbf{CelebA-A} & \textbf{UTKFace-AR} & \textbf{COMPAS} & \textbf{ViT-B/16} & \textbf{Frozen} \\
\midrule
$\lamf$ & $0.025$ & $0.01$ & $0.1$ & $0.01$ & $0.01$ & $0.01$ \\
$\lams$ & $0.05$ & $0.025$ & $0.05$ & $0.2$ & $0.025$ & $0.05$ \\
$\rhof$ & $0.4$ & $0.5$ & $1.3$ & $0.3$ & $1.5$ & $1.0$ \\
$\rhos$ & $0.005$ & $0.5$ & $0.1$ & $0.005$ & $2.0$ & $0.5$ \\
$\sigma$ & $2$ & $5$ & $5$ & $2$ & $5$ & $5$ \\
\bottomrule
\end{tabular}
\end{table}

\section{Additional Results}
\label{app:additional_results}

\subsection{Bound Tightness}
\label{app:linear_bound_validation}

We evaluate the bound of Theorem~\ref{thm:dp-hsic} on trained models. For each run, we compute the right-hand side of Equation~\ref{eq:dp_bound} on the test set, with $B$ the largest absolute entry of the trained classification head and $\HSIC$ the linear-kernel HSIC of Definition~\ref{def:linear_hsic}, and compare it with the demographic parity gap $\ddp(\model)$ of the logits. \Cref{fig:bound_tightness} reports both on Adult, CelebA-A, UTKFace-AR, and COMPAS as $\rhof$ varies, together with the demographic parity gap of the predicted classes. The bound follows the gap as $\rhof$ changes. At the selected $\rhof$, it exceeds the gap by a factor of $2.0$ on Adult, $2.4$ on CelebA-A, $5.5$ on UTKFace-AR, and $14.3$ on COMPAS. The factor is largest on COMPAS, whose smallest group has a prevalence of $0.05$, since $c_\nngroups$ grows with $1/p_{\min}$.

\begin{figure}[h]
    \centering
    \includegraphics[width=\linewidth]{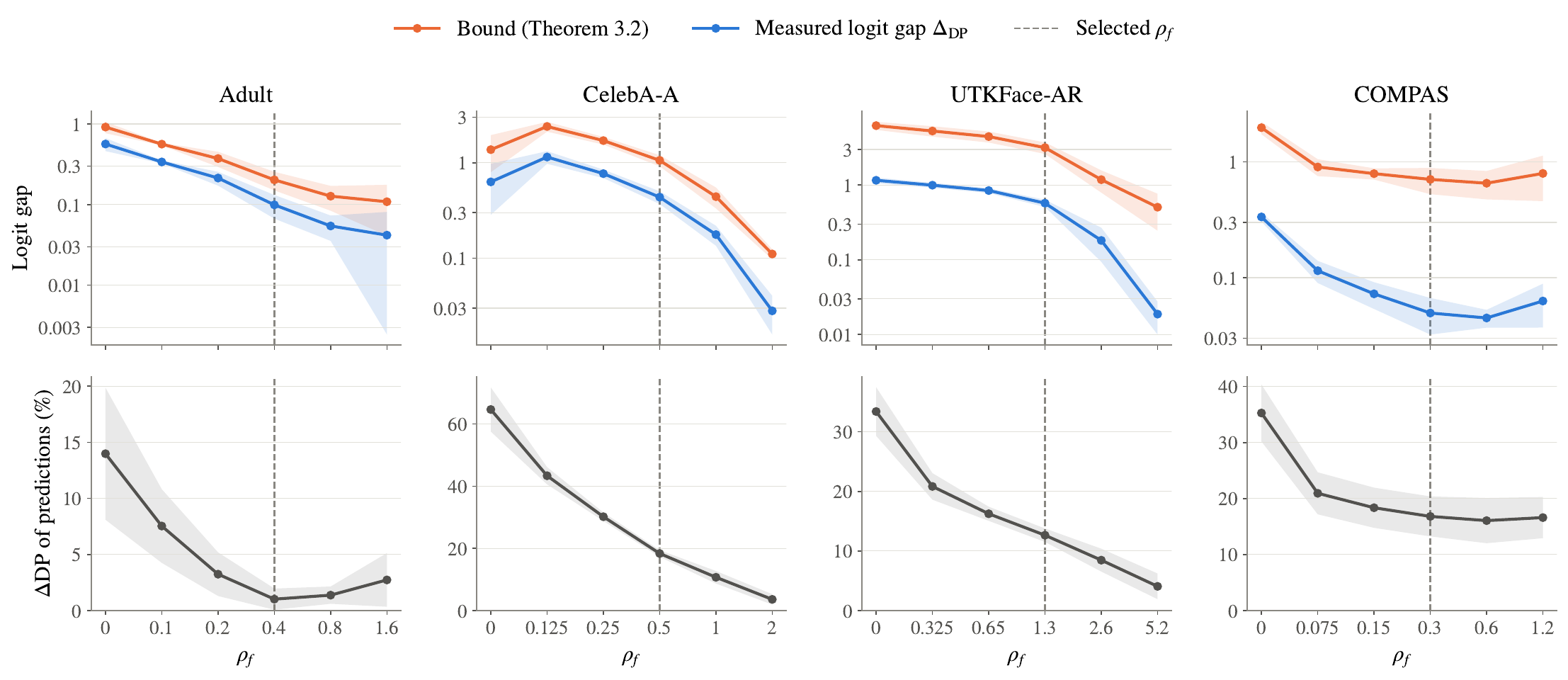}
    \caption{Bound of Theorem~\ref{thm:dp-hsic} and the demographic parity gap of the logits (top), and the demographic parity gap of the predicted classes (bottom), as the HSIC regularization strength $\rhof$ varies. Shaded regions are the mean $\pm$ std over 5 seeds, and the dashed line marks the reported $\rhof$.}
    \label{fig:bound_tightness}
\end{figure}

\subsection{Robustness Across Encoders and Embedding Dimensionalities}
\label{app:encoder_robustness}

We evaluate \algname{} across five pretrained encoders spanning embedding dimensionalities $m \in \{512, 960, 1280, 2048\}$. \algname{} uses the same hyperparameters for every encoder, and only the encoder is changed, with no per-architecture retuning. This setup tests two properties at once: (i) whether \algname{} is robust to the choice of encoder, and (ii) whether the closed-form mask (Proposition~\ref{prop:mask_solution}) adapts $\dfair$ to embedding spaces of different sizes. Results on UTKFace-AR and CelebA-A are reported in \Cref{tab:encoder_robustness}, averaged over 5 seeds. We report F1$\uparrow$ (UTKFace-AR) or AP$\uparrow$ (CelebA-A) as the utility metric, $\Delta$DP$\downarrow$ as the fairness metric, the relative $\Delta$DP reduction over ERM, and the learned Fair Space dimensionality $\dfair$ together with the encoder's embedding dimensionality $m$.~\looseness=-1

Across all five backbones, \algname{} reduces the demographic parity gap by between $45\%$ and $56\%$ on UTKFace-AR and between $38\%$ and $65\%$ on CelebA-A, at a utility cost that varies by encoder. The mask adapts $\dfair$ to each embedding without manual thresholding or per-encoder tuning. The active fraction $\dfair / m$ varies across backbones, for example $233/512 \approx 46\%$ for ResNet-18 versus $1161/1280 \approx 91\%$ for EfficientNetV2-S. Since the bound of Theorem~\ref{thm:dp-hsic} scales with $\sqrt{\dfair}$, the mask balances fairness and utility in both low- and high-dimensional embedding spaces. \algname{} therefore transfers across these five encoders and their embedding sizes without retuning.

\begin{table}[t]
\centering
\caption{Encoder robustness on UTKFace-AR (3-class age, 3 racial groups) and CelebA-A. \algname{} uses the same hyperparameters for every encoder, and only the encoder is changed, with no per-architecture retuning. Since the hyperparameters are not tuned per encoder, the $95\%$ utility gate of the main tables is not applied here. Utility and $\Delta$DP columns show ERM $\to$ \algname{}, and Red.\ is the relative reduction of $\Delta$DP. Mean over 5 seeds.~\looseness=-1}
\label{tab:encoder_robustness}
\small
\setlength{\tabcolsep}{3pt}
\resizebox{\linewidth}{!}{
\begin{tabular}{l c cccc cccc}
\toprule
& & \multicolumn{4}{c}{UTKFace-AR} & \multicolumn{4}{c}{CelebA-A} \\
\cmidrule(lr){3-6} \cmidrule(lr){7-10}
Encoder & $m$ & F1$\uparrow$ & $\Delta$DP$\downarrow$ & Red. & $\dfair$ & AP$\uparrow$ & $\Delta$DP$\downarrow$ & Red. & $\dfair$ \\
\midrule
ResNet-18        & 512  & 73.8 $\to$ 70.4 & 27.9 $\to$ 12.6 & 55\% & 233  & 84.5 $\to$ 80.4 & 52.8 $\to$ 18.3 & 65\% & 334  \\
ResNet-34        & 512  & 73.8 $\to$ 69.1 & 28.9 $\to$ 12.8 & 56\% & 243  & 84.5 $\to$ 78.1 & 53.2 $\to$ 18.4 & 65\% & 297  \\
ResNet-50        & 2048 & 74.8 $\to$ 72.2 & 27.9 $\to$ 14.9 & 47\% & 1003 & 85.9 $\to$ 80.6 & 52.4 $\to$ 25.5 & 51\% & 1230 \\
EfficientNetV2-S & 1280 & 73.2 $\to$ 73.8 & 29.1 $\to$ 15.9 & 45\% & 1161 & 84.9 $\to$ 82.6 & 50.6 $\to$ 31.2 & 38\% & 777  \\
MobileNetV4-S    & 960  & 70.4 $\to$ 67.5 & 28.5 $\to$ 13.9 & 51\% & 584  & 84.0 $\to$ 74.7 & 51.6 $\to$ 19.3 & 63\% & 419  \\
\bottomrule
\end{tabular}
}
\end{table}

\paragraph{Embedding size.} To test whether the embedding size alone affects fairness, we vary the dimensionality $m$ of the encoder's output embedding on Adult (MLP layers $[512, 256, m]$) over $m \in \{32, 64, 128, 256, 512\}$, with 5 seeds per setting (\Cref{tab:embedding_size}). Unlike in the main tables, the $95\%$ utility selection rule is not applied, so the utility cost varies across sizes. ERM barely changes across the whole range, from $19.8$ to $17.5$ in $\Delta$DP with ABCC staying flat, even at $m = 32$, which is smaller than the Fair Space that \algname{} learns at $m = 64$ ($\dfair \approx 47$). The embedding size is therefore not by itself a fairness mechanism. \algname{} keeps $\Delta$DP between $1.0$ and $3.3$ at every size with no systematic trend, and reduces ABCC from about $18.7$ to between $0.8$ and $4.3$, at a utility cost of $3.2$ to $6.5$ AP. The fairness gain therefore comes from the decomposition with HSIC-based regularization rather than from the embedding size.

\begin{table}[t]
\centering
\caption{Embedding-size ablation on Adult (5 seeds).}
\label{tab:embedding_size}
\small
\setlength{\tabcolsep}{4pt}
\begin{tabular}{l l ccccc}
\toprule
& & $m=32$ & $m=64$ & $m=128$ & $m=256$ & $m=512$ \\
\midrule
\multirow{2}{*}{$\Delta$DP $\downarrow$} & ERM & 19.8$\pm$0.8 & 19.4$\pm$0.6 & 18.5$\pm$0.5 & 18.2$\pm$0.3 & 17.5$\pm$0.2 \\
 & \algname{} & 3.3$\pm$1.2 & 1.0$\pm$0.9 & 3.2$\pm$2.6 & 1.3$\pm$0.5 & 2.5$\pm$2.4 \\
\midrule
\multirow{2}{*}{AP $\uparrow$} & ERM & 76.8$\pm$0.5 & 76.7$\pm$0.5 & 77.0$\pm$0.5 & 77.3$\pm$0.4 & 77.5$\pm$0.4 \\
 & \algname{} & 71.5$\pm$0.8 & 72.9$\pm$0.9 & 71.6$\pm$1.2 & 70.8$\pm$1.2 & 74.3$\pm$1.0 \\
\midrule
\multirow{2}{*}{ABCC $\downarrow$} & ERM & 18.8$\pm$0.3 & 18.7$\pm$0.2 & 18.5$\pm$0.2 & 18.6$\pm$0.2 & 18.5$\pm$0.1 \\
 & \algname{} & 0.9$\pm$0.2 & 2.4$\pm$0.7 & 1.5$\pm$0.3 & 0.8$\pm$0.2 & 4.3$\pm$0.5 \\
\bottomrule
\end{tabular}
\end{table}

\subsection{Fine-Tuned Vision Transformers and Frozen Foundation Models}
\label{app:scaling}
\textbf{ViT-B/16.} We fine-tune an ImageNet-pretrained ViT-B/16 end-to-end as the \algname{} encoder on FairFace at its native $224 \times 224$ resolution, predicting whether age is at least $30$ with gender as the sensitive attribute. ERM uses the same architecture and optimization. \algname{} lowers $\Delta$DP from $23.0$ to $9.5$ and more than halves every other fairness metric while retaining $97\%$ of ERM's AP, and the mask keeps $552$ of the $768$ dimensions (\Cref{tab:pretrained}).

\textbf{Frozen foundation models and transfer to a related task.} We apply \algname{} on top of frozen CLIP ViT-L/14, CLIP ResNet-50, and DINOv2 ViT-L/14 features of CelebA, with gender as the sensitive attribute. Since the backbone stays frozen, the split acts on a small trainable projection ($1024$-$512$-$256$) trained to predict high cheekbones (Task A). We then keep the projection and the mask fixed and train only a new linear head on $\zfair$ to predict whether a person is young (Task B), without using the sensitive attribute. The hyperparameters selected on CLIP ViT-L/14 are reused unchanged on the other two backbones. ERM uses the same architecture and optimization and follows the same transfer protocol, with its head reading the full embedding. On Task B, which the Fair Space was never trained or tuned on, \algname{} improves every fairness metric on all three backbones while retaining at least $95\%$ of ERM's AP (\Cref{tab:transfer}).

\begin{table}[h]
\centering
\caption{Transfer of the Fair Space from Task A (\texttt{High\_Cheekbones}) to Task B (\texttt{Young}) on three frozen backbones (5 seeds). Task B uses a fresh linear head on the frozen representation.}
\label{tab:transfer}
\setlength{\tabcolsep}{3pt}
\resizebox{\linewidth}{!}{
\begin{tabular}{l l l cccccc}
\toprule
Backbone & Task & Method & AP $\uparrow$ & $\Delta$DP $\downarrow$ & ABCC $\downarrow$ & $\Delta$EOpp $\downarrow$ & $\Delta$EOdd $\downarrow$ & $\dfair / m$ \\
\midrule
\multirow{4}{*}{CLIP ViT-L/14} & \multirow{2}{*}{A} & ERM & \textbf{93.7$\pm$0.0} & 27.8$\pm$0.4 & 26.0$\pm$0.2 & 15.8$\pm$0.4 & 18.3$\pm$1.0 & 256 / 256 \\
 & & \cellcolor{gray!15}\algname{} & \cellcolor{gray!15}89.3$\pm$0.4 & \cellcolor{gray!15}\textbf{13.1$\pm$1.2} & \cellcolor{gray!15}\textbf{9.0$\pm$0.6} & \cellcolor{gray!15}\textbf{1.2$\pm$0.7} & \cellcolor{gray!15}\textbf{12.1$\pm$1.0} & \cellcolor{gray!15}174 / 256 \\
 & \multirow{2}{*}{B} & ERM & \textbf{96.5$\pm$0.1} & 21.4$\pm$0.9 & 20.0$\pm$0.6 & 4.9$\pm$0.5 & 23.6$\pm$2.4 & 256 / 256 \\
 & & \cellcolor{gray!15}\algname{} & \cellcolor{gray!15}93.1$\pm$0.4 & \cellcolor{gray!15}\textbf{9.1$\pm$1.3} & \cellcolor{gray!15}\textbf{6.4$\pm$0.7} & \cellcolor{gray!15}\textbf{0.8$\pm$0.3} & \cellcolor{gray!15}\textbf{4.8$\pm$1.6} & \cellcolor{gray!15}174 / 256 \\
\midrule
\multirow{4}{*}{CLIP ResNet-50} & \multirow{2}{*}{A} & ERM & \textbf{94.5$\pm$0.0} & 27.1$\pm$0.2 & 26.3$\pm$0.1 & 14.0$\pm$0.5 & 15.7$\pm$0.5 & 256 / 256 \\
 & & \cellcolor{gray!15}\algname{} & \cellcolor{gray!15}90.8$\pm$0.7 & \cellcolor{gray!15}\textbf{15.4$\pm$1.3} & \cellcolor{gray!15}\textbf{10.5$\pm$0.9} & \cellcolor{gray!15}\textbf{1.5$\pm$1.2} & \cellcolor{gray!15}\textbf{10.9$\pm$0.6} & \cellcolor{gray!15}168 / 256 \\
 & \multirow{2}{*}{B} & ERM & \textbf{96.6$\pm$0.1} & 23.4$\pm$0.4 & 21.0$\pm$0.3 & 6.1$\pm$0.5 & 29.4$\pm$1.5 & 256 / 256 \\
 & & \cellcolor{gray!15}\algname{} & \cellcolor{gray!15}93.6$\pm$0.4 & \cellcolor{gray!15}\textbf{12.2$\pm$2.4} & \cellcolor{gray!15}\textbf{7.9$\pm$1.5} & \cellcolor{gray!15}\textbf{0.9$\pm$0.5} & \cellcolor{gray!15}\textbf{8.5$\pm$5.0} & \cellcolor{gray!15}168 / 256 \\
\midrule
\multirow{4}{*}{DINOv2 ViT-L/14} & \multirow{2}{*}{A} & ERM & \textbf{94.4$\pm$0.0} & 28.7$\pm$0.2 & 26.8$\pm$0.2 & 16.8$\pm$0.3 & 19.7$\pm$0.5 & 256 / 256 \\
 & & \cellcolor{gray!15}\algname{} & \cellcolor{gray!15}90.7$\pm$0.5 & \cellcolor{gray!15}\textbf{15.2$\pm$1.0} & \cellcolor{gray!15}\textbf{10.3$\pm$0.6} & \cellcolor{gray!15}\textbf{1.7$\pm$0.8} & \cellcolor{gray!15}\textbf{11.2$\pm$0.4} & \cellcolor{gray!15}166 / 256 \\
 & \multirow{2}{*}{B} & ERM & \textbf{95.2$\pm$0.1} & 23.5$\pm$0.4 & 22.1$\pm$0.2 & 7.3$\pm$0.3 & 35.3$\pm$1.7 & 256 / 256 \\
 & & \cellcolor{gray!15}\algname{} & \cellcolor{gray!15}90.9$\pm$0.5 & \cellcolor{gray!15}\textbf{9.1$\pm$1.8} & \cellcolor{gray!15}\textbf{5.9$\pm$1.2} & \cellcolor{gray!15}\textbf{0.5$\pm$0.4} & \cellcolor{gray!15}\textbf{7.1$\pm$1.7} & \cellcolor{gray!15}166 / 256 \\
\bottomrule
\end{tabular}
}
\end{table}

\subsection{Probing the Fair and Sensitive Spaces}
\label{app:probes}
To measure what a representation reveals about the sensitive attribute, we freeze a trained model and fit probes that predict $\sens$ from it. The linear probe is a logistic regression, and the MLP probe has two hidden layers of $256$ units. Both are fit on training embeddings and evaluated by balanced accuracy on test embeddings, averaged over $5$ seeds. \Cref{tab:probes} compares the full embedding, the Fair Space, and the Sensitive Space of each \algname{} model. \Cref{tab:probes_baselines} compares \algname{} with the baselines on the input of their classifiers.

\textbf{Fair and Sensitive Spaces.} \Cref{tab:probes} reports four tasks on frozen CLIP ViT-L/14, with the setup of Appendix~\ref{app:scaling}, and the CelebA-A model of \Cref{tab:combined_performance}. In every setting, the Sensitive Space predicts $\sens$ as well as the full embedding, while a linear probe predicts it less accurately from the Fair Space, at $50$-$64\%$ on frozen CLIP features. A non-linear MLP probe recovers $\sens$ from the Fair Space in all settings.\looseness=-1

\textbf{Comparison with baselines.} \Cref{tab:probes_baselines} probes the Fair Space of \algname{} and the full embedding of each baseline, which is the input of their classifiers. A linear probe predicts $\sens$ least accurately from the Fair Space of \algname{} on Adult, UTKFace-AR, and COMPAS, by $8.4$ to $18.8$ points below the next-best method, and at the level of random guessing on COMPAS. With the MLP probe, the Fair Space of \algname{} is also the least predictive on UTKFace-AR and COMPAS, while on Adult all methods lie within $2.1$ points.

\textbf{Scope.} These results are consistent with Theorem~\ref{thm:dp-hsic}, which bounds the demographic parity of a linear head through the linear dependence between $\zfair$ and $\sens$. \algname{} reduces this dependence rather than removing $\sens$ from the representation entirely. Complete removal would force the group-wise error rates of every classifier to sum to at least $\Delta\pi$~\citep{zhao2022inherent}.

\begin{table}[t]
\centering
\caption{Balanced accuracy (\%, 5 seeds) of probes predicting the sensitive attribute from the full embedding $\vz$, the Fair Space $\zfair$, and the Sensitive Space $\zsens$ of the same trained model, where random guessing scores $50\%$. CLIP ViT-L/14 is frozen (Appendix~\ref{app:scaling}), and ResNet-18 is the CelebA-A model of \Cref{tab:combined_performance}. For $\vz$, AP, and $\Delta$DP, arrows show ERM $\to$ \algname{} with the same architecture. Lower probe accuracy means that $\sens$ is harder to predict.}
\label{tab:probes}
\small
\setlength{\tabcolsep}{4pt}
\resizebox{\linewidth}{!}{
\begin{tabular}{l l c ccc c c c c}
\toprule
& & & \multicolumn{3}{c}{Linear probe} & MLP probe & & & \\
\cmidrule(lr){4-6} \cmidrule(lr){7-7}
Backbone & Task & $\Delta\pi$ & $\vz$ & $\zfair$ & $\zsens$ & $\zfair$ & $\dfair / m$ & AP $\uparrow$ & $\Delta$DP $\downarrow$ \\
\midrule
\multirow{4}{*}{CLIP ViT-L/14} & \texttt{Smiling}          & 14 & 98.1 $\to$ 99.0 & 50.4$\pm$0.3 & 99.0$\pm$0.0 & 95.9$\pm$1.1 & 167 / 256 & 97.9 $\to$ 97.0 & 15.2 $\to$ 8.5 \\
                               & \texttt{Young}            & 24 & 98.5 $\to$ 99.0 & 61.0$\pm$6.9 & 99.0$\pm$0.0 & 89.0$\pm$1.9 & 91 / 256  & 97.6 $\to$ 95.5 & 23.2 $\to$ 4.9 \\
                               & \texttt{High\_Cheekbones} & 25 & 98.4 $\to$ 99.0 & 54.7$\pm$2.1 & 99.0$\pm$0.0 & 97.2$\pm$0.9 & 174 / 256 & 93.7 $\to$ 89.3 & 27.8 $\to$ 13.1 \\
                               & \texttt{Wavy\_Hair}       & 31 & 98.8 $\to$ 99.0 & 64.4$\pm$4.5 & 99.0$\pm$0.0 & 94.8$\pm$1.1 & 115 / 256 & 85.6 $\to$ 83.3 & 35.4 $\to$ 26.6 \\
\midrule
ResNet-18 & \texttt{Attractive} & 40 & 85.7 $\to$ 94.8 & 88.2$\pm$2.4 & 94.8$\pm$0.2 & 94.5$\pm$0.3 & 334 / 512 & 84.5 $\to$ 80.4 & 52.8 $\to$ 18.3 \\
\bottomrule
\end{tabular}
}
\end{table}

\begin{table}[t]
\centering
\caption{Balanced accuracy (\%, 5 seeds) of probes predicting the sensitive attribute from the input of each method's classifier, the Fair Space $\zfair$ for \algname{} and the full embedding $\vz$ for the baselines. Random guessing scores $50\%$ on Adult, $33\%$ on UTKFace-AR, and $25\%$ on COMPAS. The probed models are those of \Cref{tab:combined_performance,tab:utkface_compas}. Lower probe accuracy means that $\sens$ is harder to predict, and the lowest value per column is in bold. A dash (---) marks baselines that support only binary targets and binary sensitive attributes, so they cannot be applied to UTKFace-AR (three age classes, three racial groups) or COMPAS (four groups).}
\label{tab:probes_baselines}
\small
\setlength{\tabcolsep}{4pt}
\begin{tabular}{l cc cc cc}
\toprule
& \multicolumn{2}{c}{Adult} & \multicolumn{2}{c}{UTKFace-AR} & \multicolumn{2}{c}{COMPAS} \\
\cmidrule(lr){2-3} \cmidrule(lr){4-5} \cmidrule(lr){6-7}
Method & Linear & MLP & Linear & MLP & Linear & MLP \\
\midrule
ERM         & 71.7$\pm$0.8 & 79.1$\pm$0.9 & 69.4$\pm$0.8 & 68.2$\pm$0.9 & 45.1$\pm$0.9 & 48.5$\pm$0.6 \\
AdvDebias   & 80.0$\pm$2.5 & 80.1$\pm$1.9 & 70.0$\pm$1.2 & 66.4$\pm$1.8 & 48.7$\pm$0.5 & 47.8$\pm$0.7 \\
FairMixup   & 75.9$\pm$0.6 & 80.6$\pm$0.7 & --- & --- & --- & --- \\
DRAlign     & 72.1$\pm$2.4 & 79.7$\pm$0.9 & --- & --- & --- & --- \\
HSIC$(Y,S)$ & 72.0$\pm$1.6 & 78.6$\pm$2.5 & 64.8$\pm$1.8 & 62.2$\pm$1.8 & 44.1$\pm$0.7 & 47.5$\pm$0.6 \\
DiffMCDP    & 73.7$\pm$1.4 & 79.5$\pm$2.5 & --- & --- & --- & --- \\
PRemover    & 71.0$\pm$2.1 & 79.3$\pm$2.5 & --- & --- & --- & --- \\
MMD-B-Fair  & 68.9$\pm$0.7 & 79.3$\pm$1.0 & --- & --- & --- & --- \\
\rowcolor{gray!15}\textbf{\algname{}} & \textbf{60.5$\pm$1.4} & \textbf{78.5$\pm$1.6} & \textbf{48.9$\pm$1.7} & \textbf{60.6$\pm$1.8} & \textbf{25.3$\pm$0.3} & \textbf{35.2$\pm$4.3} \\
\bottomrule
\end{tabular}
\end{table}

\subsection{Empirical Performance with Non-linear Classifier Heads}
\label{app:nonlinear_head}

Theorem~\ref{thm:dp-hsic} requires a linear classification head. We test how much this assumption matters in practice by replacing the linear head with a non-linear MLP head. \Cref{tab:nonlinear_head} compares the linear-head and non-linear-head settings on CelebA-A and UTKFace-AR.

\begin{table}[h]
\centering
\caption{\algname{} with linear and non-linear classification heads on $\zfair$ (5 seeds). The utility metric is AP on CelebA-A and F1 on UTKFace-AR. For each dataset and head, bold marks the better of ERM and \algname{}.}
\label{tab:nonlinear_head}
\small
\setlength{\tabcolsep}{4pt}
\resizebox{\linewidth}{!}{
\begin{tabular}{ll ccccccc}
\toprule
Method & Head & AP/F1$\uparrow$ & $\Delta$DP$\downarrow$ & ABCC$\downarrow$ & MCDP$\downarrow$ & $\Delta$EOpp$\downarrow$ & $\Delta$EOdd$\downarrow$ & $\dfair$ \\
\midrule
\multicolumn{9}{l}{\emph{CelebA-A}} \\
ERM       & Linear     & \textbf{84.5$\pm$0.4} & 52.8$\pm$0.8 & 35.9$\pm$0.8 & 53.4$\pm$0.8 & 36.8$\pm$1.5 & 75.0$\pm$2.1 & 512 \\
\rowcolor{gray!15}\cellcolor{white}\algname{} & Linear     & 80.4$\pm$1.1 & \textbf{18.3$\pm$1.1} & \textbf{9.4$\pm$1.3} & \textbf{19.6$\pm$1.8} & \textbf{2.6$\pm$0.3} & \textbf{5.2$\pm$1.7} & 334 \\
ERM       & Non-linear & \textbf{85.3$\pm$0.4} & 53.7$\pm$1.9 & 37.5$\pm$1.2 & 54.0$\pm$2.0 & 37.3$\pm$3.0 & 75.6$\pm$4.9 & 512 \\
\rowcolor{gray!15}\cellcolor{white}\algname{} & Non-linear & 81.7$\pm$0.5 & \textbf{23.1$\pm$3.6} & \textbf{13.5$\pm$1.5} & \textbf{24.3$\pm$3.3} & \textbf{4.4$\pm$2.2} & \textbf{11.0$\pm$5.0} & 369 \\
\midrule
\multicolumn{9}{l}{\emph{UTKFace-AR}} \\
ERM       & Linear     & \textbf{73.8$\pm$0.7} & 27.9$\pm$1.4 & 27.1$\pm$2.6 & 36.2$\pm$2.4 & 29.1$\pm$5.2 & 35.9$\pm$3.0 & 512 \\
\rowcolor{gray!15}\cellcolor{white}\algname{} & Linear     & 70.4$\pm$1.5 & \textbf{12.6$\pm$1.1} & \textbf{9.1$\pm$1.0} & \textbf{18.7$\pm$1.2} & \textbf{14.9$\pm$4.6} & \textbf{22.1$\pm$4.9} & 233 \\
ERM       & Non-linear & \textbf{74.9$\pm$1.0} & 30.4$\pm$1.7 & 28.8$\pm$1.7 & 39.9$\pm$2.9 & 33.2$\pm$5.4 & 40.8$\pm$4.8 & 512 \\
\rowcolor{gray!15}\cellcolor{white}\algname{} & Non-linear & 70.3$\pm$2.1 & \textbf{17.5$\pm$3.7} & \textbf{12.4$\pm$1.8} & \textbf{21.5$\pm$2.6} & \textbf{23.4$\pm$3.4} & \textbf{26.5$\pm$2.2} & 289 \\
\bottomrule
\end{tabular}
}
\end{table}

The fairness gains are smaller with the non-linear head, as expected, since a non-linear head can use sensitive information in $\zfair$ that a linear head cannot. They remain substantial: $\Delta$DP drops from $53.7$ to $23.1$ on CelebA-A ($57\%$ reduction) and from $30.4$ to $17.5$ on UTKFace-AR ($42\%$ reduction), with utility within about $5$ points of ERM in both settings. The mask keeps $\dfair = 369$ dimensions on CelebA-A and $289$ on UTKFace-AR. The linear head is therefore needed for the theoretical guarantee, but not for the method to improve fairness in practice.

\subsection{Performance under Imposed Group Imbalance}
\label{app:imbalance}

This subsection complements the intersectional COMPAS experiment in the main paper (\Cref{tab:utkface_compas}) with a controlled imbalance setting on a high-dimensional image dataset. We resample CelebA-A so that one sensitive group constitutes only $5\%$ of the training data. Each method keeps its hyperparameters from \Cref{tab:best-baselines,tab:fairsplit-params}, and only the training distribution changes.

\Cref{tab:imbalance_celeba} shows that \algname{} achieves the best score on all five fairness metrics while matching HSIC$(Y,S)$ on AP. AdvDebias stays close to ERM on every fairness metric under this imbalance.

\begin{table}[h]
\centering
\caption{CelebA-A with an imposed $95/5$ group imbalance in the training data (5 seeds). Each method keeps its hyperparameters from \Cref{tab:best-baselines,tab:fairsplit-params}. Best in \textbf{bold}, second-best \underline{underlined}.}
\label{tab:imbalance_celeba}
\small
\begin{tabular}{l ccccc|c}
\toprule
 & $\Delta$DP$\downarrow$ & ABCC$\downarrow$ & MCDP$\downarrow$ & $\Delta$EOpp$\downarrow$ & $\Delta$EOdd$\downarrow$ & AP$\uparrow$ \\
\midrule
ERM        & 54.0$\pm$0.4 & 34.2$\pm$0.8 & 54.4$\pm$0.5 & 42.4$\pm$1.8 & 79.3$\pm$1.6 & \underline{84.1$\pm$0.3} \\
\midrule
AdvDebias  & 52.3$\pm$2.7 & 35.1$\pm$1.1 & 53.6$\pm$1.9 & 35.4$\pm$8.8 & 73.3$\pm$7.0 & \textbf{85.4$\pm$0.3} \\
HSIC$(Y,S)$ & \underline{19.4$\pm$1.9} & \underline{10.2$\pm$1.1} & \underline{23.1$\pm$2.8} & \underline{2.3$\pm$1.2} & \underline{7.2$\pm$4.0} & 80.7$\pm$0.8 \\
\midrule
\rowcolor{gray!15}\textbf{\algname{}} & \textbf{18.2$\pm$1.6} & \textbf{9.7$\pm$1.0} & \textbf{22.8$\pm$2.8} & \textbf{1.8$\pm$1.3} & \textbf{7.1$\pm$3.5} & 80.7$\pm$0.7 \\
\bottomrule
\end{tabular}
\end{table}

\subsection{Fairness-Utility Trade-off and the Base-Rate Gap}
\label{app:basegap}
To isolate the effect of the base-rate gap, we fix the task (\texttt{Smiling} on CelebA, gender as $\sens$), the frozen CLIP ViT-L/14 features with a trainable projection, and all hyperparameters including $\rhof$. We then resample the four $(\y,\sens)$ cells so that only $\Delta\pi$ changes, holding $\mathbb{P}(\y{=}1)$, $\mathbb{P}(\sens{=}1)$, and the sample size fixed. \Cref{tab:basegap} shows that \algname{} reduces $\Delta$DP by about half at every level, and by more at the smallest gap, while its AP cost grows with $\Delta\pi$.
\begin{table}[t]
\centering
\caption{Varying only the base-rate gap $\Delta\pi$ (frozen CLIP ViT-L/14, \texttt{Smiling}, 5 seeds).}
\label{tab:basegap}
\small
\setlength{\tabcolsep}{4pt}
\begin{tabular}{c cc cc c}
\toprule
\multirow{2}{*}{$\Delta\pi$} & \multicolumn{2}{c}{AP $\uparrow$} & \multicolumn{2}{c}{$\Delta$DP $\downarrow$} & \multirow{2}{*}{AP cost $\downarrow$} \\
\cmidrule(lr){2-3} \cmidrule(lr){4-5}
 & ERM & \algname{} & ERM & \algname{} & \\
\midrule
5  & 97.5$\pm$0.0 & 96.6$\pm$0.3 & 4.0$\pm$0.2  & 0.7$\pm$0.5  & 0.9 \\
15 & 97.5$\pm$0.0 & 96.4$\pm$0.2 & 14.4$\pm$0.5 & 6.6$\pm$0.7  & 1.1 \\
25 & 97.6$\pm$0.1 & 95.3$\pm$0.3 & 24.7$\pm$0.3 & 12.6$\pm$1.1 & 2.3 \\
35 & 97.8$\pm$0.1 & 93.9$\pm$0.3 & 35.4$\pm$0.3 & 17.4$\pm$1.6 & 3.9 \\
45 & 98.0$\pm$0.1 & 91.3$\pm$0.4 & 45.7$\pm$0.3 & 20.3$\pm$1.9 & 6.7 \\
\bottomrule
\end{tabular}
\end{table}

\subsection{Hyperparameter Sensitivity}
\label{app:hp_sensitivity}

\Cref{fig:ablations} ablates each of the four loss weights $\lamf, \lams, \rhof, \rhos$ on CelebA-A while holding the others at their default values. Fairness improvements are driven primarily by the independence weight $\rhof$. $\Delta$DP, ABCC, and MCDP all decrease sharply as $\rhof$ grows, at a gradual utility cost in AP. Varying $\lamf$ has little effect on the metrics. The Sensitive Space weights $\lams$ and $\rhos$ both shrink $\dfair$. A larger $\lams$ also lowers $\Delta$DP and ABCC, at a cost in AP, while a larger $\rhos$ allocates more dimensions to $\zsens$ with almost no change in AP, which tightens the bound in Theorem~\ref{thm:dp-hsic}. The outcome-conditioned metrics $\Delta$EOpp and $\Delta$EOdd fall as $\rhof$ grows to its default value and rise slightly beyond it, and they rise with $\lams$.\looseness=-1

\begin{figure}[h]
    \centering
    \includegraphics[width=\linewidth]{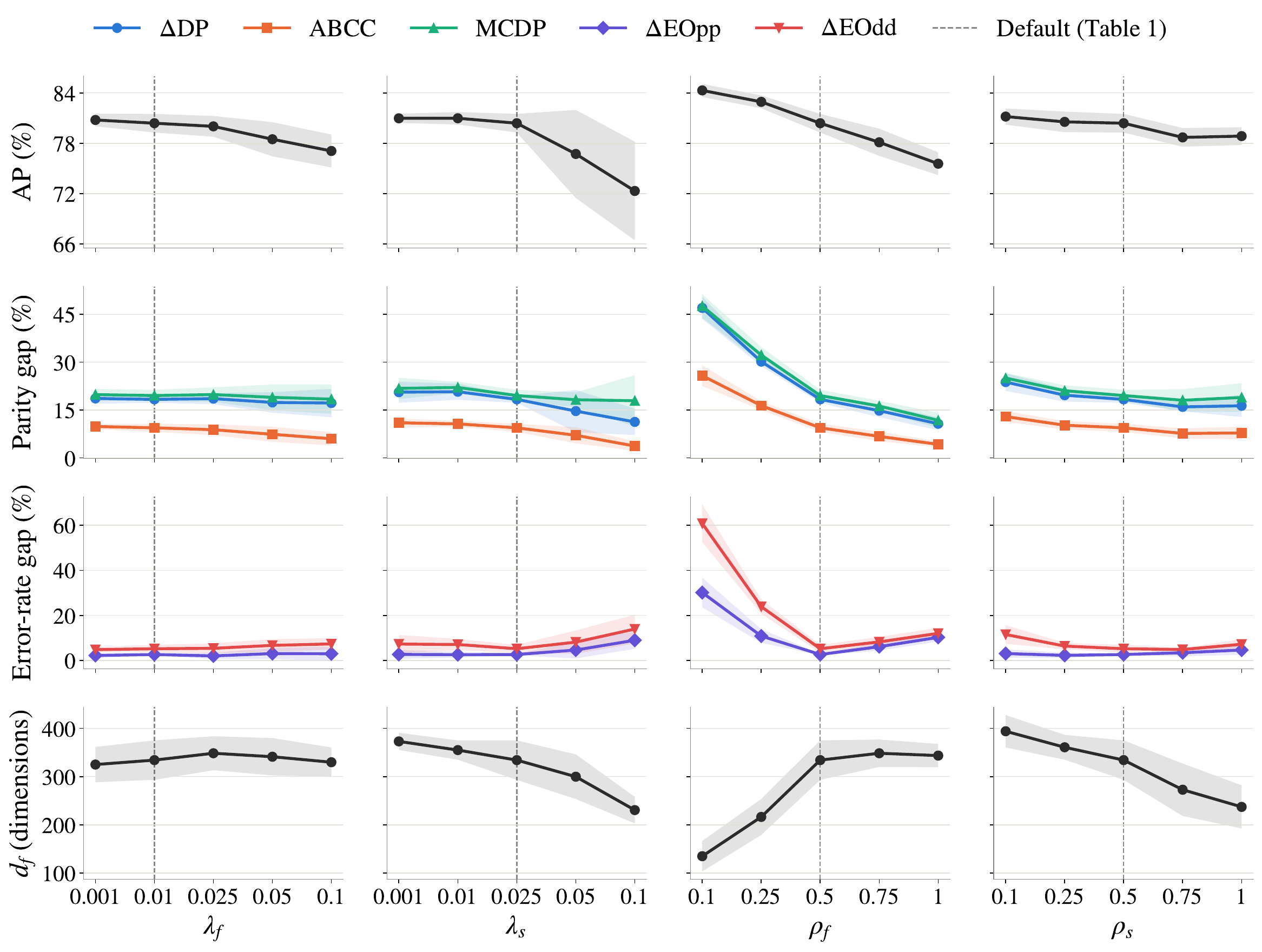}
    \caption{Hyperparameter ablation on CelebA-A (5 seeds). We vary $\lambda_f, \lambda_s, \rho_f, \rho_s$ (left to right) while fixing the other three at their defaults (dashed line). Rows: AP, parity gaps ($\Delta$DP, ABCC, MCDP), error-rate gaps ($\Delta$EOpp, $\Delta$EOdd), and the learned Fair Space dimensionality $\dfair$. Fairness improvements are driven primarily by $\rho_f$, and the method is robust to $\lambda_f$. Increasing $\lambda_s$ and $\rho_s$ shrinks $\dfair$.\looseness=-1}
    \label{fig:ablations}
\end{figure}

\subsection{Component Ablation Across Datasets}
\label{app:component_ablation_extra}

\Cref{tab:component_ablation_extra} extends \Cref{tab:component_ablation} to UTKFace-AR and reports utility and all five fairness metrics on both image datasets. The same progression observed on CelebA-A holds on UTKFace-AR. Introducing the mask without any Sensitive Space loss (Unregularized) already reduces $\Delta$DP from $26.6$ to $22.2$ on CelebA-A and from $17.8$ to $15.3$ on UTKFace-AR while shrinking $\dfair$, and regularizing the Sensitive Space recovers full \algname{}'s gains. Adding only $\lams$ or only $\rhos$ to Unregularized isolates each Sensitive Space term, and \algname{} without $\lamf$ removes the Fair Space compression. The two Sensitive Space terms contribute comparably, and neither alone reaches the $\Delta$DP of full \algname{}. Removing $\lamf$ matches full \algname{} on CelebA-A but costs $1.7$ F1 on UTKFace-AR. Removing $\rhof$ eliminates the fairness gains on both datasets, and on CelebA-A the Fair Space collapses to $25$ dimensions on average.

\begin{table}[h]
\centering
\caption{Component ablation on CelebA-A and UTKFace-AR (5 seeds). The utility metric is AP on CelebA-A and F1 on UTKFace-AR. The best value per metric within each dataset is in \textbf{bold}. $^{\dagger}$One of the five seeds collapses to a constant predictor.}
\label{tab:component_ablation_extra}
\small
\setlength{\tabcolsep}{4pt}
\resizebox{\linewidth}{!}{%
\begin{tabular}{l ccccccc}
\toprule
Variant & AP/F1$\uparrow$ & $\Delta$DP$\downarrow$ & ABCC$\downarrow$ & MCDP$\downarrow$ & $\Delta$EOpp$\downarrow$ & $\Delta$EOdd$\downarrow$ & $\dfair$ \\
\midrule
\multicolumn{8}{l}{\emph{CelebA-A}} \\
ERM                                 & \textbf{84.5$\pm$0.4} & 52.8$\pm$0.8 & 35.9$\pm$0.8 & 53.4$\pm$0.8 & 36.8$\pm$1.5 & 75.0$\pm$2.1 & 512 \\
No Sensitive Space                  & 80.7$\pm$0.7 & 26.6$\pm$1.4 & 16.1$\pm$1.0 & 26.8$\pm$1.0 & 5.5$\pm$0.9 & 18.4$\pm$2.8 & 512 \\
Unregularized ($\lams{=}\rhos{=}0$) & 80.1$\pm$0.8 & 22.2$\pm$2.4 & 12.4$\pm$1.7 & 22.8$\pm$1.6 & 1.9$\pm$0.9 & 9.1$\pm$2.4 & 391 \\
Unregularized $+\,\lams$            & 79.9$\pm$1.2 & 20.1$\pm$2.7 & 10.7$\pm$1.8 & 21.6$\pm$2.0 & 2.4$\pm$1.3 & 8.1$\pm$2.5 & 366 \\
Unregularized $+\,\rhos$            & 80.9$\pm$0.6 & 21.4$\pm$1.8 & 11.2$\pm$1.0 & 22.4$\pm$1.2 & \textbf{1.4$\pm$2.2} & 6.8$\pm$2.8 & 371 \\
\algname{} without $\lamf$          & 80.6$\pm$0.6 & 18.4$\pm$0.9 & 9.6$\pm$0.8 & \textbf{19.4$\pm$1.2} & 2.1$\pm$0.8 & \textbf{4.4$\pm$1.2} & 326 \\
\algname{} without $\rhof$\textsuperscript{$\dagger$} & 82.9$\pm$1.2 & 51.7$\pm$26.6 & 12.1$\pm$8.9 & 75.0$\pm$8.8 & 53.2$\pm$28.1 & 87.4$\pm$45.8 & 25 \\
\rowcolor{gray!15}\textbf{\algname{}} & 80.4$\pm$1.1 & \textbf{18.3$\pm$1.1} & \textbf{9.4$\pm$1.3} & 19.6$\pm$1.8 & 2.6$\pm$0.3 & 5.2$\pm$1.7 & 334 \\
\midrule
\multicolumn{8}{l}{\emph{UTKFace-AR}} \\
ERM                                 & \textbf{73.8$\pm$0.7} & 27.9$\pm$1.4 & 27.1$\pm$2.6 & 36.2$\pm$2.4 & 29.1$\pm$5.2 & 35.9$\pm$3.0 & 512 \\
No Sensitive Space                  & 71.8$\pm$1.6 & 17.8$\pm$2.0 & 15.7$\pm$1.9 & 23.3$\pm$1.9 & 18.5$\pm$4.5 & 24.3$\pm$5.8 & 512 \\
Unregularized ($\lams{=}\rhos{=}0$) & 71.1$\pm$1.1 & 15.3$\pm$1.1 & 10.9$\pm$0.9 & 19.9$\pm$1.3 & 19.3$\pm$6.1 & 23.9$\pm$6.0 & 389 \\
Unregularized $+\,\lams$            & 70.0$\pm$1.8 & 13.3$\pm$1.2 & \textbf{9.0$\pm$0.9} & \textbf{18.7$\pm$1.1} & 17.8$\pm$6.7 & 22.7$\pm$5.9 & 223 \\
Unregularized $+\,\rhos$            & 71.1$\pm$1.4 & 14.0$\pm$2.1 & 11.2$\pm$1.2 & 20.0$\pm$1.7 & 15.9$\pm$4.7 & \textbf{22.1$\pm$5.0} & 402 \\
\algname{} without $\lamf$          & 68.7$\pm$1.7 & 13.0$\pm$2.2 & 10.3$\pm$1.0 & 19.4$\pm$2.2 & 16.5$\pm$6.9 & 22.2$\pm$6.9 & 222 \\
\algname{} without $\rhof$          & \textbf{73.8$\pm$1.5} & 33.4$\pm$4.1 & 19.5$\pm$1.1 & 39.4$\pm$2.7 & 37.8$\pm$4.9 & 49.5$\pm$5.1 & 270 \\
\rowcolor{gray!15}\textbf{\algname{}} & 70.4$\pm$1.5 & \textbf{12.6$\pm$1.1} & 9.1$\pm$1.0 & \textbf{18.7$\pm$1.2} & \textbf{14.9$\pm$4.6} & \textbf{22.1$\pm$4.9} & 233 \\
\bottomrule
\end{tabular}%
}
\end{table}

\subsection{HSIC Kernel and Bandwidth Sensitivity}
\label{app:kernel_sensitivity}

The HSIC regularizer $\lossreg$ can be instantiated with any characteristic kernel. We test whether \algname{} is empirically stable across kernel choices. All experiments are on CelebA-A with otherwise identical hyperparameters, averaged over 5 seeds.

\paragraph{Kernel choice.} \Cref{tab:kernel_family} compares three characteristic kernel families (Gaussian RBF, Mat\'ern $3/2$, Laplacian). Gaussian and Mat\'ern yield nearly identical results, suggesting stability across smooth characteristic kernels. The Laplacian kernel gives weaker fairness at the same regularization strength, but still reduces $\Delta$DP by $33\%$ relative to ERM.

\paragraph{Bandwidth sensitivity.} For the Gaussian RBF kernel, we sweep the bandwidth $\sigma \in \{2, 3, 4, 5, 10\}$ on CelebA-A (\Cref{tab:kernel_family}). Every bandwidth reduces $\Delta$DP by $53\%$ to $65\%$ relative to ERM. We fix $\sigma = 5$ for image and $\sigma = 2$ for tabular experiments (\Cref{tab:fairsplit-params}).

\begin{table}[t]
\centering
\caption{HSIC kernel family and Gaussian RBF bandwidth on CelebA-A (5 seeds). Only the kernel or $\sigma$ changes, and ERM is reported as a reference without fairness regularization.}
\label{tab:kernel_family}
\small
\begin{tabular}{lccc}
\toprule
 & AP$\uparrow$ & $\Delta$DP$\downarrow$ & ABCC$\downarrow$ \\
\midrule
ERM & 84.5$\pm$0.4 & 52.8$\pm$0.8 & 35.9$\pm$0.8 \\
\midrule
\multicolumn{4}{l}{\emph{Kernel family ($\sigma = 5$)}} \\
Gaussian RBF   & 80.4$\pm$1.1 & 18.3$\pm$1.1 & \phantom{0}9.4$\pm$1.3 \\
Mat\'ern $3/2$ & 80.5$\pm$1.5 & 18.6$\pm$2.9 & 10.0$\pm$2.1 \\
Laplacian      & 78.1$\pm$2.5 & 35.4$\pm$6.6 & 16.4$\pm$4.2 \\
\midrule
\multicolumn{4}{l}{\emph{Gaussian RBF bandwidth}} \\
$\sigma = 2$  & 80.8$\pm$1.5 & 24.7$\pm$2.4 & 13.5$\pm$1.5 \\
$\sigma = 3$  & 80.7$\pm$0.9 & 19.9$\pm$1.9 & 10.6$\pm$1.6 \\
$\sigma = 4$  & 81.0$\pm$1.0 & 19.9$\pm$1.0 & 10.4$\pm$0.6 \\
$\sigma = 5$  & 80.4$\pm$1.1 & 18.3$\pm$1.1 & \phantom{0}9.4$\pm$1.3 \\
$\sigma = 10$ & 79.3$\pm$1.4 & 20.8$\pm$1.6 & \phantom{0}9.8$\pm$1.5 \\
\bottomrule
\end{tabular}
\end{table}

\subsection{Computational Cost}
\label{app:compute_cost}

We measure wall-clock training time per run on a single NVIDIA A100, averaged over 5 seeds, for both a tabular (Adult) and an image (CelebA-A) dataset. \Cref{tab:compute_cost} reports absolute seconds and the relative cost over the ERM baseline. \algname{} adds $17$ to $22\%$ wall-clock overhead over ERM, which lies within the range of the other in-processing fairness methods we evaluate ($0.93$ to $1.76\times$). On Adult, the overhead comes primarily from the HSIC computation in Equation~\ref{eq:hsic_loss}. On CelebA-A, it comes mostly from initializing the mask, which requires one pass over the training set, while the per-step HSIC and mask updates (\Cref{prop:mask_solution}) add only about $3\%$.

\begin{table}[t]
\centering
\caption{Wall-clock training time per run (in seconds), averaged over 5 seeds on a single NVIDIA A100. ``vs.\ ERM'' is the multiplicative cost relative to the ERM baseline.}
\label{tab:compute_cost}
\small
\begin{tabular}{lcccc}
\toprule
Method & Adult (s) & vs.\ ERM & CelebA-A (s) & vs.\ ERM \\
\midrule
ERM                     & 11.4 & 1.00$\times$ & 27.7 & 1.00$\times$ \\
\midrule
AdvDebias               & 10.6 & 0.93$\times$ & 28.6 & 1.03$\times$ \\
FairMixup               & 12.4 & 1.09$\times$ & 35.8 & 1.29$\times$ \\
DRAlign                 & 13.8 & 1.21$\times$ & 48.6 & 1.76$\times$ \\
HSIC$(Y,S)$             & 11.9 & 1.05$\times$ & 28.4 & 1.03$\times$ \\
DiffMCDP                & 16.3 & 1.43$\times$ & 28.8 & 1.04$\times$ \\
PRemover                & 11.3 & 0.99$\times$ & 29.8 & 1.08$\times$ \\
MMD-B-Fair              & 16.6 & 1.46$\times$ & 35.5 & 1.28$\times$ \\
\midrule
\rowcolor{gray!15}\textbf{\algname{}} & 13.9 & 1.22$\times$ & 32.3 & 1.17$\times$ \\
\bottomrule
\end{tabular}
\end{table}

\subsection{Robustness to Noisy and Missing Sensitive Attributes}
\label{app:noisy_missing}
\algname{} requires sensitive attributes during training, but in practice, these labels can be partially incorrect or unavailable. We assess robustness on CelebA-A under two corruption regimes, without changing any hyperparameters. \emph{Noisy~$p$:} each sensitive label is independently flipped with probability $p$. \emph{Missing~$p$:} a fraction $p$ of training samples have their sensitive label dropped entirely. For these samples, the HSIC and structural-compression terms involving $\sens$ are masked out, while the cross-entropy task loss is still applied. As a control, we also train with randomly shuffled sensitive labels. Results are averaged over 5 seeds.
As shown in \Cref{tab:noisy_missing}, \algname{} tolerates missing sensitive labels well: with $10\%$ and $30\%$ of the labels removed, $\Delta$DP is $21.8$ and $28.0$ against $52.8$ for ERM, and the gain shrinks gradually as more labels are removed. It is more sensitive to incorrect labels: $10\%$ label noise keeps most of the gain ($\Delta$DP $28.3$), while $30\%$ removes it. With shuffled labels, \algname{} matches ERM, so its gains come from the sensitive labels. Extending \algname{} to the semi-supervised setting is left for future work.
\begin{table}[t]
\centering
\caption{Robustness on CelebA-A under noisy and missing sensitive attributes (5 seeds). ``Noisy $p$'' flips each sensitive label with probability $p$. ``Missing $p$'' drops the sensitive label for a fraction $p$ of samples (these samples contribute only to the cross-entropy loss). ``Shuffled'' randomly permutes all sensitive labels. ERM and clean \algname{} are reported as references.}
\label{tab:noisy_missing}
\small
\begin{tabular}{lccc}
\toprule
Condition & AP$\uparrow$ & $\Delta$DP$\downarrow$ & ABCC$\downarrow$ \\
\midrule
ERM                 & 84.5$\pm$0.4 & 52.8$\pm$0.8 & 35.9$\pm$0.8 \\
\algname{} (clean)  & 80.4$\pm$1.1 & 18.3$\pm$1.1 & \phantom{0}9.4$\pm$1.3 \\
\midrule
Noisy 10\%          & 82.8$\pm$0.7 & 28.3$\pm$2.6 & 16.2$\pm$1.5 \\
Noisy 30\%          & 85.5$\pm$0.2 & 53.1$\pm$0.9 & 31.5$\pm$0.9 \\
Noisy 50\%          & 85.6$\pm$0.3 & 54.0$\pm$0.4 & 32.2$\pm$0.6 \\
\midrule
Missing 10\%        & 81.0$\pm$0.9 & 21.8$\pm$1.9 & 11.6$\pm$1.2 \\
Missing 30\%        & 81.4$\pm$1.2 & 28.0$\pm$1.3 & 15.4$\pm$1.0 \\
Missing 50\%        & 83.3$\pm$0.7 & 36.7$\pm$1.7 & 22.5$\pm$0.9 \\
Missing 70\%        & 84.4$\pm$0.4 & 46.5$\pm$1.1 & 29.4$\pm$0.4 \\
\midrule
Shuffled            & 85.4$\pm$0.2 & 53.4$\pm$0.9 & 31.8$\pm$0.6 \\
\bottomrule
\end{tabular}
\end{table}

\clearpage

\input{checklist.tex}

\end{document}

%% file: abbreviations.tex
\newcommand{\vct}[1]{\mathbf{#1}}
\newcommand{\mat}[1]{\mathbf{#1}}
\newcommand{\set}[1]{\mathcal{#1}}

\newcommand*{\algname}{\textsc{FairSplit}}

\newcommand{\paramEnc}{\phi}
\newcommand{\paramClf}{\psi}
\newcommand{\paramAll}{\theta}

\newcommand{\tagSens}{s}
\newcommand{\tagFair}{f}

\newcommand{\symbX}{x}
\newcommand{\symbY}{y}
\newcommand{\symbZ}{z}
\newcommand{\symbS}{s}

\newcommand{\xspace}{\set{X}}
\newcommand{\yspace}{\set{Y}}
\newcommand{\zspace}{\set{Z}}
\newcommand{\sspace}{\set{S}}

\newcommand{\vx}{\vct{\symbX}}
\newcommand{\vy}{\vct{\symbY}}
\newcommand{\vz}{\vct{\symbZ}}
\newcommand{\vs}{\vct{\symbS}}
\newcommand{\vb}{\vct{b}}

\newcommand{\y}{\symbY}
\newcommand{\sens}{\symbS}
\newcommand{\nngroups}{G}

\newcommand{\encparams}{\paramEnc}
\newcommand{\clfparams}{\paramClf}
\newcommand{\allparams}{\paramAll}

\newcommand{\encoder}{f_{\paramEnc}}
\newcommand{\classifier}{g_{\paramClf}}
\newcommand{\model}{h_{\paramAll}}

\newcommand{\zfair}{\vz_{\tagFair}}
\newcommand{\zsens}{\vz_{\tagSens}}

\newcommand{\Zfair}{\mat{Z}_{\tagFair}}
\newcommand{\Zsens}{\mat{Z}_{\tagSens}}

\newcommand{\dfair}{d_{\tagFair}}

\newcommand{\cclass}{\boldsymbol{\mu}}
\newcommand{\cgroup}{\boldsymbol{\nu}}

\newcommand{\meanemb}{\boldsymbol{\xi}}

\newcommand{\loss}{\mathcal{L}}
\newcommand{\losstask}{\loss_{\text{task}}}
\newcommand{\losscomp}{\loss_{\text{comp}}}
\newcommand{\lossreg}{\loss_{\text{reg}}}
\newcommand{\losstotal}{\loss_{\text{total}}}

\newcommand{\lamf}{\lambda_{\tagFair}}
\newcommand{\lams}{\lambda_{\tagSens}}
\newcommand{\rhof}{\rho_{\tagFair}}
\newcommand{\rhos}{\rho_{\tagSens}}

\newcommand{\vartagClass}{y}
\newcommand{\vartagGroup}{s}

\DeclareMathOperator{\HSIC}{HSIC}
\newcommand{\ddp}{\Delta_{\text{DP}}}

\newcommand{\real}{\mathbb{R}}
\newcommand{\expect}{\mathbb{E}}
\newcommand{\norm}[1]{\left\|#1\right\|}

%% file: checklist.tex
\section*{NeurIPS Paper Checklist}

\begin{enumerate}

\item {\bf Claims}
    \item[] Question: Do the main claims made in the abstract and introduction accurately reflect the paper's contributions and scope?
    \item[] Answer: \answerYes{} 
    \item[] Justification: Our theoretical claim is supported with a proof: we bound a continuous relaxation of the demographic parity gap by the HSIC between the Fair Space and the sensitive attribute and the Fair Space dimensionality, both of which \algname{} controls during training. Empirically, we validate the method on standard fair classification benchmarks, including multiclass and non-binary sensitive-attributes.

    \item[] Guidelines:
    \begin{itemize}
        \item The answer \answerNA{} means that the abstract and introduction do not include the claims made in the paper.
        \item The abstract and/or introduction should clearly state the claims made, including the contributions made in the paper and important assumptions and limitations. A \answerNo{} or \answerNA{} answer to this question will not be perceived well by the reviewers. 
        \item The claims made should match theoretical and experimental results, and reflect how much the results can be expected to generalize to other settings. 
        \item It is fine to include aspirational goals as motivation as long as it is clear that these goals are not attained by the paper. 
    \end{itemize}

\item {\bf Limitations}
    \item[] Question: Does the paper discuss the limitations of the work performed by the authors?
    \item[] Answer: \answerYes{} 
    \item[] Justification: We provide a dedicated \emph{Limitations and Discussion} subsection (\Cref{sec:limitations}) covering the linear-head assumption of our bound, the bound's $1/p_{\min}$ dependence on minimum group prevalence, the requirement for sensitive attributes during training, and the residual sensitive information retained in the Sensitive Space. The first three are paired with targeted experiments and pointers to the relevant appendix. The last is an inherent design property and is addressed by a deployment recommendation.
    \item[] Guidelines:
    \begin{itemize}
        \item The answer \answerNA{} means that the paper has no limitation while the answer \answerNo{} means that the paper has limitations, but those are not discussed in the paper. 
        \item The authors are encouraged to create a separate ``Limitations'' section in their paper.
        \item The paper should point out any strong assumptions and how robust the results are to violations of these assumptions (e.g., independence assumptions, noiseless settings, model well-specification, asymptotic approximations only holding locally). The authors should reflect on how these assumptions might be violated in practice and what the implications would be.
        \item The authors should reflect on the scope of the claims made, e.g., if the approach was only tested on a few datasets or with a few runs. In general, empirical results often depend on implicit assumptions, which should be articulated.
        \item The authors should reflect on the factors that influence the performance of the approach. For example, a facial recognition algorithm may perform poorly when image resolution is low or images are taken in low lighting. Or a speech-to-text system might not be used reliably to provide closed captions for online lectures because it fails to handle technical jargon.
        \item The authors should discuss the computational efficiency of the proposed algorithms and how they scale with dataset size.
        \item If applicable, the authors should discuss possible limitations of their approach to address problems of privacy and fairness.
        \item While the authors might fear that complete honesty about limitations might be used by reviewers as grounds for rejection, a worse outcome might be that reviewers discover limitations that aren't acknowledged in the paper. The authors should use their best judgment and recognize that individual actions in favor of transparency play an important role in developing norms that preserve the integrity of the community. Reviewers will be specifically instructed to not penalize honesty concerning limitations.
    \end{itemize}

\item {\bf Theory assumptions and proofs}
    \item[] Question: For each theoretical result, does the paper provide the full set of assumptions and a complete (and correct) proof?
    \item[] Answer: \answerYes{} 
    \item[] Justification: The paper has two main theoretical results: the closed-form mask (Proposition~\ref{prop:mask_solution}) and the demographic parity bound (Theorem~\ref{thm:dp-hsic}). Both state their assumptions explicitly in the theorem statements and have complete proofs in Appendix~\ref{app:proof_mask} and Appendix~\ref{app:proof_main_theorem}.
    \item[] Guidelines:
    \begin{itemize}
        \item The answer \answerNA{} means that the paper does not include theoretical results. 
        \item All the theorems, formulas, and proofs in the paper should be numbered and cross-referenced.
        \item All assumptions should be clearly stated or referenced in the statement of any theorems.
        \item The proofs can either appear in the main paper or the supplemental material, but if they appear in the supplemental material, the authors are encouraged to provide a short proof sketch to provide intuition. 
        \item Inversely, any informal proof provided in the core of the paper should be complemented by formal proofs provided in appendix or supplemental material.
        \item Theorems and Lemmas that the proof relies upon should be properly referenced. 
    \end{itemize}

    \item {\bf Experimental result reproducibility}
    \item[] Question: Does the paper fully disclose all the information needed to reproduce the main experimental results of the paper to the extent that it affects the main claims and/or conclusions of the paper (regardless of whether the code and data are provided or not)?
    \item[] Answer: \answerYes{}
    \item[] Justification: We describe the experimental setup in \Cref{sec:experiments}. \Cref{app:reproducibility} covers the datasets, metrics, implementation, software and hardware setup, and the used hyperparameters for every method. \Cref{app:datasets} also provides per-dataset licenses and download sources. Our code, including all training configurations, is publicly available at \codelink{}.
    \item[] Guidelines:
    \begin{itemize}
        \item The answer \answerNA{} means that the paper does not include experiments.
        \item If the paper includes experiments, a \answerNo{} answer to this question will not be perceived well by the reviewers: Making the paper reproducible is important, regardless of whether the code and data are provided or not.
        \item If the contribution is a dataset and\slash or model, the authors should describe the steps taken to make their results reproducible or verifiable. 
        \item Depending on the contribution, reproducibility can be accomplished in various ways. For example, if the contribution is a novel architecture, describing the architecture fully might suffice, or if the contribution is a specific model and empirical evaluation, it may be necessary to either make it possible for others to replicate the model with the same dataset, or provide access to the model. In general. releasing code and data is often one good way to accomplish this, but reproducibility can also be provided via detailed instructions for how to replicate the results, access to a hosted model (e.g., in the case of a large language model), releasing of a model checkpoint, or other means that are appropriate to the research performed.
        \item While NeurIPS does not require releasing code, the conference does require all submissions to provide some reasonable avenue for reproducibility, which may depend on the nature of the contribution. For example
        \begin{enumerate}
            \item If the contribution is primarily a new algorithm, the paper should make it clear how to reproduce that algorithm.
            \item If the contribution is primarily a new model architecture, the paper should describe the architecture clearly and fully.
            \item If the contribution is a new model (e.g., a large language model), then there should either be a way to access this model for reproducing the results or a way to reproduce the model (e.g., with an open-source dataset or instructions for how to construct the dataset).
            \item We recognize that reproducibility may be tricky in some cases, in which case authors are welcome to describe the particular way they provide for reproducibility. In the case of closed-source models, it may be that access to the model is limited in some way (e.g., to registered users), but it should be possible for other researchers to have some path to reproducing or verifying the results.
        \end{enumerate}
    \end{itemize}

\item {\bf Open access to data and code}
    \item[] Question: Does the paper provide open access to the data and code, with sufficient instructions to faithfully reproduce the main experimental results, as described in supplemental material?
    \item[] Answer: \answerYes{}
    \item[] Justification: Our code is publicly available at \codelink{}, with instructions to reproduce the main experiments for \algname{} and all baselines. \Cref{app:datasets} provides per-dataset licenses and download sources, and \Cref{app:reproducibility} documents the environment, the used hyperparameters, and the experimental protocol.
    \item[] Guidelines:
    \begin{itemize}
        \item The answer \answerNA{} means that paper does not include experiments requiring code.
        \item Please see the NeurIPS code and data submission guidelines (\url{https://neurips.cc/public/guides/CodeSubmissionPolicy}) for more details.
        \item While we encourage the release of code and data, we understand that this might not be possible, so \answerNo{} is an acceptable answer. Papers cannot be rejected simply for not including code, unless this is central to the contribution (e.g., for a new open-source benchmark).
        \item The instructions should contain the exact command and environment needed to run to reproduce the results. See the NeurIPS code and data submission guidelines (\url{https://neurips.cc/public/guides/CodeSubmissionPolicy}) for more details.
        \item The authors should provide instructions on data access and preparation, including how to access the raw data, preprocessed data, intermediate data, and generated data, etc.
        \item The authors should provide scripts to reproduce all experimental results for the new proposed method and baselines. If only a subset of experiments are reproducible, they should state which ones are omitted from the script and why.
        \item At submission time, to preserve anonymity, the authors should release anonymized versions (if applicable).
        \item Providing as much information as possible in supplemental material (appended to the paper) is recommended, but including URLs to data and code is permitted.
    \end{itemize}

\item {\bf Experimental setting/details}
    \item[] Question: Does the paper specify all the training and test details (e.g., data splits, hyperparameters, how they were chosen, type of optimizer) necessary to understand the results?
    \item[] Answer: \answerYes{} 
    \item[] Justification: \Cref{sec:experiments} specifies datasets, baselines, evaluation metrics, the five-seed protocol, and the model-selection rule. \Cref{app:reproducibility} gives the optimizer, learning-rate schedule, encoder architectures, batch sizes, training length, and the per-method hyperparameters.
    \item[] Guidelines:
    \begin{itemize}
        \item The answer \answerNA{} means that the paper does not include experiments.
        \item The experimental setting should be presented in the core of the paper to a level of detail that is necessary to appreciate the results and make sense of them.
        \item The full details can be provided either with the code, in appendix, or as supplemental material.
    \end{itemize}

\item {\bf Experiment statistical significance}
    \item[] Question: Does the paper report error bars suitably and correctly defined or other appropriate information about the statistical significance of the experiments?
    \item[] Answer: \answerYes{} 
    \item[] Justification: All main tables report mean $\pm$ one standard deviation across five seeds (\Cref{sec:experiments}), with seed variation covering the train/val/test split, model initialization, and batch order. We do not perform dedicated significance tests.
    \item[] Guidelines:
    \begin{itemize}
        \item The answer \answerNA{} means that the paper does not include experiments.
        \item The authors should answer \answerYes{} if the results are accompanied by error bars, confidence intervals, or statistical significance tests, at least for the experiments that support the main claims of the paper.
        \item The factors of variability that the error bars are capturing should be clearly stated (for example, train/test split, initialization, random drawing of some parameter, or overall run with given experimental conditions).
        \item The method for calculating the error bars should be explained (closed form formula, call to a library function, bootstrap, etc.)
        \item The assumptions made should be given (e.g., Normally distributed errors).
        \item It should be clear whether the error bar is the standard deviation or the standard error of the mean.
        \item It is OK to report 1-sigma error bars, but one should state it. The authors should preferably report a 2-sigma error bar than state that they have a 96\% CI, if the hypothesis of Normality of errors is not verified.
        \item For asymmetric distributions, the authors should be careful not to show in tables or figures symmetric error bars that would yield results that are out of range (e.g., negative error rates).
        \item If error bars are reported in tables or plots, the authors should explain in the text how they were calculated and reference the corresponding figures or tables in the text.
    \end{itemize}

\item {\bf Experiments compute resources}
    \item[] Question: For each experiment, does the paper provide sufficient information on the computer resources (type of compute workers, memory, time of execution) needed to reproduce the experiments?
    \item[] Answer: \answerYes{}
    \item[] Justification: \Cref{app:hardware} reports the hardware. \Cref{app:compute_cost} provides per-run wall-clock training times for \algname{} and all baselines on Adult and CelebA-A, averaged over 5 seeds.
    \item[] Guidelines:
    \begin{itemize}
        \item The answer \answerNA{} means that the paper does not include experiments.
        \item The paper should indicate the type of compute workers CPU or GPU, internal cluster, or cloud provider, including relevant memory and storage.
        \item The paper should provide the amount of compute required for each of the individual experimental runs as well as estimate the total compute. 
        \item The paper should disclose whether the full research project required more compute than the experiments reported in the paper (e.g., preliminary or failed experiments that didn't make it into the paper). 
    \end{itemize}
    
\item {\bf Code of ethics}
    \item[] Question: Does the research conducted in the paper conform, in every respect, with the NeurIPS Code of Ethics \url{https://neurips.cc/public/EthicsGuidelines}?
    \item[] Answer: \answerYes{}
    \item[] Justification: We read and followed the NeurIPS Code of Ethics. The work targets fairness directly (mitigating disparate treatment based on sensitive attributes), uses only established public benchmarks under their stated terms (\Cref{app:datasets}) and involves no human-subjects experimentation.
    \item[] Guidelines:
    \begin{itemize}
        \item The answer \answerNA{} means that the authors have not reviewed the NeurIPS Code of Ethics.
        \item If the authors answer \answerNo, they should explain the special circumstances that require a deviation from the Code of Ethics.
        \item The authors should make sure to preserve anonymity (e.g., if there is a special consideration due to laws or regulations in their jurisdiction).
    \end{itemize}

\item {\bf Broader impacts}
    \item[] Question: Does the paper discuss both potential positive societal impacts and negative societal impacts of the work performed?
    \item[] Answer: \answerYes{}
    \item[] Justification: The work targets a positive societal application (mitigating disparate treatment based on sensitive attributes), discussed in \Cref{sec:intro} and \Cref{sec:related_work}. Specific risks and limitations are addressed in \Cref{sec:limitations}: the Sensitive Space retains protected-attribute information so model parameters must be secured, and the method requires sensitive labels at training time, with partial robustness to noisy or missing labels shown in \Cref{app:noisy_missing}.
    \item[] Guidelines:
    \begin{itemize}
        \item The answer \answerNA{} means that there is no societal impact of the work performed.
        \item If the authors answer \answerNA{} or \answerNo, they should explain why their work has no societal impact or why the paper does not address societal impact.
        \item Examples of negative societal impacts include potential malicious or unintended uses (e.g., disinformation, generating fake profiles, surveillance), fairness considerations (e.g., deployment of technologies that could make decisions that unfairly impact specific groups), privacy considerations, and security considerations.
        \item The conference expects that many papers will be foundational research and not tied to particular applications, let alone deployments. However, if there is a direct path to any negative applications, the authors should point it out. For example, it is legitimate to point out that an improvement in the quality of generative models could be used to generate Deepfakes for disinformation. On the other hand, it is not needed to point out that a generic algorithm for optimizing neural networks could enable people to train models that generate Deepfakes faster.
        \item The authors should consider possible harms that could arise when the technology is being used as intended and functioning correctly, harms that could arise when the technology is being used as intended but gives incorrect results, and harms following from (intentional or unintentional) misuse of the technology.
        \item If there are negative societal impacts, the authors could also discuss possible mitigation strategies (e.g., gated release of models, providing defenses in addition to attacks, mechanisms for monitoring misuse, mechanisms to monitor how a system learns from feedback over time, improving the efficiency and accessibility of ML).
    \end{itemize}
    
\item {\bf Safeguards}
    \item[] Question: Does the paper describe safeguards that have been put in place for responsible release of data or models that have a high risk for misuse (e.g., pre-trained language models, image generators, or scraped datasets)?
    \item[] Answer: \answerNA{} 
    \item[] Justification: We do not release pre-trained models or data.
    \item[] Guidelines:
    \begin{itemize}
        \item The answer \answerNA{} means that the paper poses no such risks.
        \item Released models that have a high risk for misuse or dual-use should be released with necessary safeguards to allow for controlled use of the model, for example by requiring that users adhere to usage guidelines or restrictions to access the model or implementing safety filters. 
        \item Datasets that have been scraped from the Internet could pose safety risks. The authors should describe how they avoided releasing unsafe images.
        \item We recognize that providing effective safeguards is challenging, and many papers do not require this, but we encourage authors to take this into account and make a best faith effort.
    \end{itemize}

\item {\bf Licenses for existing assets}
    \item[] Question: Are the creators or original owners of assets (e.g., code, data, models), used in the paper, properly credited and are the license and terms of use explicitly mentioned and properly respected?
    \item[] Answer: \answerYes{} 
    \item[] Justification: All datasets are cited and their licenses, source URLs, and terms of use are documented in \Cref{app:datasets}. Code we build upon (FFB~\citep{ffb}, DiffMCDP~\citep{DiffMCDP_MCDP_ICML}, H-SPLID~\citep{H-SPLID_NeurIPS}, MMD-B-Fair~\citep{mmd-b-fair}) is cited with repository links in \Cref{app:reproducibility}, and we use these assets in accordance with their stated terms.
    \item[] Guidelines:
    \begin{itemize}
        \item The answer \answerNA{} means that the paper does not use existing assets.
        \item The authors should cite the original paper that produced the code package or dataset.
        \item The authors should state which version of the asset is used and, if possible, include a URL.
        \item The name of the license (e.g., CC-BY 4.0) should be included for each asset.
        \item For scraped data from a particular source (e.g., website), the copyright and terms of service of that source should be provided.
        \item If assets are released, the license, copyright information, and terms of use in the package should be provided. For popular datasets, \url{paperswithcode.com/datasets} has curated licenses for some datasets. Their licensing guide can help determine the license of a dataset.
        \item For existing datasets that are re-packaged, both the original license and the license of the derived asset (if it has changed) should be provided.
        \item If this information is not available online, the authors are encouraged to reach out to the asset's creators.
    \end{itemize}

\item {\bf New assets}
    \item[] Question: Are new assets introduced in the paper well documented and is the documentation provided alongside the assets?
    \item[] Answer: \answerYes{}
    \item[] Justification: We release our code at \codelink{} with a README documenting setup, dependencies, and commands to reproduce the main experiments.
    \item[] Guidelines:
    \begin{itemize}
        \item The answer \answerNA{} means that the paper does not release new assets.
        \item Researchers should communicate the details of the dataset\slash code\slash model as part of their submissions via structured templates. This includes details about training, license, limitations, etc. 
        \item The paper should discuss whether and how consent was obtained from people whose asset is used.
        \item At submission time, remember to anonymize your assets (if applicable). You can either create an anonymized URL or include an anonymized zip file.
    \end{itemize}

\item {\bf Crowdsourcing and research with human subjects}
    \item[] Question: For crowdsourcing experiments and research with human subjects, does the paper include the full text of instructions given to participants and screenshots, if applicable, as well as details about compensation (if any)? 
    \item[] Answer: \answerNA{} 
    \item[] Justification: No research with human subjects or crowdsourcing.
    \item[] Guidelines:
    \begin{itemize}
        \item The answer \answerNA{} means that the paper does not involve crowdsourcing nor research with human subjects.
        \item Including this information in the supplemental material is fine, but if the main contribution of the paper involves human subjects, then as much detail as possible should be included in the main paper. 
        \item According to the NeurIPS Code of Ethics, workers involved in data collection, curation, or other labor should be paid at least the minimum wage in the country of the data collector. 
    \end{itemize}

\item {\bf Institutional review board (IRB) approvals or equivalent for research with human subjects}
    \item[] Question: Does the paper describe potential risks incurred by study participants, whether such risks were disclosed to the subjects, and whether Institutional Review Board (IRB) approvals (or an equivalent approval/review based on the requirements of your country or institution) were obtained?
    \item[] Answer: \answerNA{}
    \item[] Justification: No research with human subjects or crowdsourcing.
    \item[] Guidelines:
    \begin{itemize}
        \item The answer \answerNA{} means that the paper does not involve crowdsourcing nor research with human subjects.
        \item Depending on the country in which research is conducted, IRB approval (or equivalent) may be required for any human subjects research. If you obtained IRB approval, you should clearly state this in the paper. 
        \item We recognize that the procedures for this may vary significantly between institutions and locations, and we expect authors to adhere to the NeurIPS Code of Ethics and the guidelines for their institution. 
        \item For initial submissions, do not include any information that would break anonymity (if applicable), such as the institution conducting the review.
    \end{itemize}

\item {\bf Declaration of LLM usage}
    \item[] Question: Does the paper describe the usage of LLMs if it is an important, original, or non-standard component of the core methods in this research? Note that if the LLM is used only for writing, editing, or formatting purposes and does \emph{not} impact the core methodology, scientific rigor, or originality of the research, declaration is not required.
    \item[] Answer: \answerNA{} 
    \item[] Justification: LLMs were only used for writing, editing, formatting, and coding assistance.
    \item[] Guidelines:
    \begin{itemize}
        \item The answer \answerNA{} means that the core method development in this research does not involve LLMs as any important, original, or non-standard components.
        \item Please refer to our LLM policy in the NeurIPS handbook for what should or should not be described.
    \end{itemize}

\end{enumerate}